\documentclass{article}
\PassOptionsToPackage{numbers, sort, compress}{natbib}
\usepackage[final]{neurips_2026}
\usepackage[utf8]{inputenc} 
\usepackage[T1]{fontenc}
\usepackage{booktabs}
\usepackage[table]{xcolor}
\usepackage{graphicx}

\usepackage{graphicx}
\usepackage{booktabs}
\usepackage{subcaption}
\usepackage{amssymb}
\usepackage{amsmath}
\usepackage{amsthm}
\usepackage{amsfonts}
\usepackage{mathtools}
\usepackage{xcolor}
\usepackage{enumitem}
\usepackage{xspace}
\usepackage{microtype}
\usepackage{bm}
\usepackage{algorithm}
\usepackage{algorithmic}
\usepackage{listings}
\usepackage{etoc}
\usepackage{authblk}

\usepackage{xurl}
\definecolor{bluegray}{rgb}{0.4, 0.6, 0.8}
\definecolor{lightcarminepink}{rgb}{0.9, 0.4, 0.38}
\usepackage[pagebackref,breaklinks,colorlinks,citecolor=brown,linkcolor=lightcarminepink]{hyperref}

\definecolor{funcBlue}{RGB}{43,145,175} 
\definecolor{commentTeal}{RGB}{110,154,155}

\DeclareMathOperator*{\supp}{supp}

\renewcommand{\aa}{\mathbf{a}}

\let\lll\ll
\renewcommand{\ll}{\mathbf{l}}

\providecommand{\xx}{\mathbf{x}}

\providecommand{\zz}{\mathbf{z}}

\providecommand{\cS}{\mathcal{S}}

\providecommand{\cU}{\mathcal{U}}

\providecommand{\R}{\mathbb{R}} 

\usepackage{mathtools}

\newcommand{\fixedxrightarrow}[1]{%
  \xrightarrow{\mathmakebox[2.0em][c]{#1}}%
}

\definecolor{fred}{HTML}{D62727}

\newcommand{\eg}{\textit{e.g.}}
\newcommand{\ie}{\textit{i.e.}}

\newcommand{\E}{\mathbb{E}}
\newcommand{\Id}{\mathbf{I}}
\newcommand{\op}{\mathrm{op}} 

\newcommand{\methodname}{World Action Verifier\xspace}
\newcommand{\methodacro}{WAV\xspace}

\usepackage{titlesec}
\titlespacing\section{0pt}{4pt plus 2pt minus 2pt}{0pt plus 2pt minus 2pt}
\titlespacing\subsection{0pt}{2pt plus 2pt minus 2pt}{0pt plus 2pt minus 2pt}
\titlespacing\subsubsection{0pt}{2pt plus 2pt minus 2pt}{0pt plus 2pt minus 2pt}

\let\oldcenter\center
\let\oldendcenter\endcenter

\renewenvironment{center}
  {\setlength{\topsep}{6pt}%
   \setlength{\partopsep}{6pt}%
   \oldcenter}
  {\oldendcenter}

\setlist[itemize]{
  leftmargin=1.5em,
  itemsep=0pt,
  parsep=0pt,
  topsep=0pt,
  partopsep=0pt
}

\setlist[enumerate]{
  itemsep=0pt,
  parsep=0pt,
  topsep=0pt,
  partopsep=0pt
}

\usepackage{tikz}
\usetikzlibrary{bayesnet}
\usetikzlibrary{arrows}
\usetikzlibrary{arrows.meta,positioning,bending}
\usetikzlibrary{backgrounds}

\theoremstyle{plain}
\newtheorem{theorem}{Theorem}[section]
\newtheorem{proposition}[theorem]{Proposition}
\newtheorem{lemma}[theorem]{Lemma}

\newtheorem{assumption}[theorem]{Assumption}
\newtheorem{condition}[theorem]{Condition}

\theoremstyle{definition}
\newtheorem{definition}[theorem]{Definition}
\theoremstyle{remark}

\usepackage[capitalize,noabbrev]{cleveref}
\crefname{section}{Sec.}{Secs.}
\Crefname{section}{Section}{Sections}
\Crefname{table}{Table}{Tables}
\crefname{table}{Table}{Tables}

\usepackage[most,skins,theorems]{tcolorbox}
\tcbset{
  aibox/.style={
    width=\linewidth,
    top=7pt,
    bottom=2pt,
    colback=rliableolive!13!white,
    colframe=black,
    colbacktitle=black,
    enhanced,
    center,
    attach boxed title to top left={yshift=-0.1in,xshift=0.15in},
    boxed title style={boxrule=0pt,colframe=white,},
  }
}
\newtcolorbox{AIbox}[2][]{aibox,title=#2,#1}

\usepackage{wrapfig}
\definecolor{lightblue}{rgb}{0.22,0.45,0.70}
\usepackage{tabularx}
\definecolor{bestyellow}{RGB}{255,244,204}
\newif\ifnips     
\nipstrue        

\title{\Large World Action Verifier:\\Self-Improving World Models via Forward-Inverse Asymmetry}

\author{
{\Authfont
Yuejiang~Liu$^{\dagger,*}$,
Fan~Feng$^{\ddagger,*}$,
Lingjing~Kong$^{\S,*}$,
Weifeng~Lu\thanks{Equal contribution. Website: \href{https://world-action-verifier.github.io}{\texttt{world-action-verifier.github.io}}} ,
Jinzhou~Tang$^{\ddagger}$,
\vspace{0.2em}
Kun~Zhang$^{\S}$,
Kevin~Murphy$^{\P}$,
Chelsea~Finn$^{\dagger}$,
Yilun~Du$^{\|}$
} \\
\vspace{0.2em}
{\Affilfont
$^{\dagger}$Stanford University \,\,
$^{\ddagger}$UC San Diego \,\,
$^{\S}$Carnegie Mellon University \,\,\\
\vspace{0.2em}
$^{\P}$Google DeepMind \,\,
$^{\|}$Harvard University
}
}
\makeatletter
\providecommand{\@trackname}{}
\makeatother
\date{\today} 

\begin{document}

\addtocontents{toc}{\protect\setcounter{tocdepth}{0}}

\maketitle

\vspace{-16pt}


\begin{abstract}
    General-purpose world models promise scalable policy evaluation, optimization, and planning, yet achieving the required level of robustness remains challenging. Unlike policy learning which primarily focuses on optimal actions, a world model needs to be reliable over a vast space of suboptimal actions, which are often underrepresented in action-labeled robot interactions. To address this challenge, we propose World Action Verifier (WAV), a framework that enables world models to identify their own prediction errors and self-improve. The key idea is to decompose action-conditioned state prediction into two independently verifiable factors: state plausibility and action reachability. We show that verifying these factors is significantly more tractable than direct forward prediction due to two underlying asymmetries: the broader availability of action-free data and the lower dimensionality of action-relevant features. Leveraging these asymmetries, we augment a world model with (i) a diverse subgoal generator obtained from video corpora and (ii) a sparse inverse model that infers actions from a subset of state features. By enforcing cycle consistency among proposed subgoals, inferred actions, and forward rollouts, WAV provides an effective verification mechanism in under-explored regimes, where existing methods often fail. Across nine tasks spanning MiniGrid, RoboMimic, and ManiSkill, our method achieves 2$\times$ higher sample efficiency while improving downstream policy performance by over 22\%. 
\end{abstract}

\section{Introduction}
\label{sec:introduction}

World models—action-conditioned forward dynamics models that predict future states given specific actions or action chunks—have come to play an increasingly important role in robot learning~\citep{ha2018recurrent,wu2022daydreamerworldmodelsphysical,assran2025v,panteam2025panworldmodelgeneral,zheng2025flarerobotlearningimplicit,huang2026pointworldscaling3dworld}. 
Recent works have shown that, when trained on action-labeled robot interactions alongside action-free internet videos~\citep{rigter2024avid,huang2025vid2world,ye2026world}, world models have the potential to not only generate controllable future dynamics but also enable scalable policy evaluation~\citep{quevedo2025worldgymworldmodelenvironment, geminiroboticsteam2026evaluatinggeminiroboticspolicies,zhu2025irasimfinegrainedworldmodel,li2025worldevalworldmodelrealworld}, policy optimization~\citep{hafner2019learning, yang2023learning, yang2024learninginteractiverealworldsimulators, guo2025ctrlworldcontrollablegenerativeworld}, and test-time planning~\citep{hafner2019dream, hafner2023mastering, jain2025smoothseaskilledsailor,zhou2025dinowmworldmodelspretrained,qi2025strengthening}.

Despite remarkable progress, building a general-purpose world model that is robust enough for various downstream applications remains difficult. A central challenge is \emph{action following}: predicting future states that faithfully reflect the effects of the given actions~\citep{shang2026worldarena}.
Unlike policy learning, which primarily focuses on modeling optimal actions, a world model must be reliable across a much broader action distribution, including suboptimal, exploratory, and even random actions encountered during policy learning or evaluation~\citep{lecun2022path, zhang2024whale, jain2025smoothseaskilledsailor}.
However, collecting robot interactions covering diverse actions is often slow, expensive, and sometimes even unsafe.
Given a limited budget of robot data, deciding which specific interactions to collect remains a pressing challenge.

Previous work has sought to address this through two main approaches. One line of work relies on on-policy exploration, \ie, gathering data by rolling out the policies of interest~\citep{jain2025smoothseaskilledsailor,guo2026vlaw,liu2026world}. While effective for the considered policies, the learned model often degrades sharply beyond predefined policy sets, compromising its generality.
Another line of work focuses on info-max exploration, actively seeking interactions that maximize information gain~\citep{pathak2017curiositydrivenexplorationselfsupervisedprediction,sekar2020planning,kim2020activeworldmodellearning}. 
A common proxy for information gain is the prediction error of the world model, estimated before collecting the corresponding transition--a process we refer to as \emph{world model verification}. This verification process, however, often suffers from a practical challenge: existing methods tend to be reliable in well-explored regions where additional data are largely redundant, but unreliable in under-explored regions where verification is critically needed. After all, the interactions that are most informative for exploration are precisely those where the least prior information exists for verification.
This tension raises a central question:
\vspace{-0.6em}
\begin{center}
\emph{How can we reliably verify the predictions of a world model in under-explored regimes?}
\end{center}
\vspace{-0.6em}

To this end, we propose \methodname{} (\methodacro), a framework that enables world models to verify their own predictions and self-improve through an asymmetric forward-inverse cycle. The core idea is to decompose action-conditioned forward predictions into two complementary factors: \emph{state plausibility}, \ie, whether a predicted state is visually realistic, and \emph{action reachability}, \ie, whether the predicted transition is physically achievable under the given actions.
This decomposition not only allows each factor to be verified separately, but also admits two crucial asymmetries: (i) the broader availability of action-free data: state plausibility can be verified using internet videos without action labels, which are far more abundant than the action-labeled robot interactions used to train the world model, and (ii) lower dimensionality of action-relevant features: action reachability can be verified based on a compact subset of state features relevant to the actions, which are much lower-dimensional than the full state the world model must predict.

Motivated by these asymmetries, we augment a world model with two additional components: a diverse subgoal generator obtained from video corpora, and a sparse inverse model trained to infer actions from a learned subset of state features. 
Together, these components induce a goal-oriented self-improvement cycle: the subgoal generator proposes plausible future states, the inverse model infers actions that could reach them, and the forward world model rolls out those actions to test whether the predicted states are consistent with the proposed subgoals~(\cref{fig:teaser}).
Theoretically, we show that verification via a sparse inverse process is easier than dense forward generation, particularly in high-dimensional stochastic environments. Empirically, we evaluate \methodacro{} on nine tasks spanning MiniGrid~\citep{chevalier2023minigrid}, RoboMimic~\citep{zhu2020robosuite}, and ManiSkill~\citep{mu2maniskill}. 
Compared to existing methods, \methodacro{} improves the sample efficiency of world models by $2\times$ and boosts downstream policy performance by more than 22\%. Our results suggest that the asymmetries between forward and inverse dynamics offer a promising ingredient for building self-improving world models.

\begin{figure}[t]
    \vspace{-8pt}   
    \centering
    \small
    \includegraphics[width=0.95\linewidth]{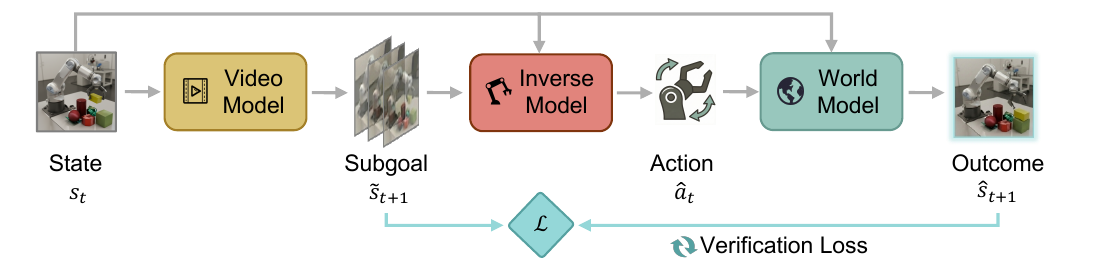}
    \caption{Overview of \methodname{}, a framework that enables action-conditioned world models to verify their predictions and self-improve from an asymmetric forward-inverse cycle: (i) a {\em diverse} subgoal generator proposes plausible future states, (ii) a {\em sparse} inverse model infers actions from a relevant subset of state features, and (iii) a world model rolls forward and verifies consistency between its predicted state and the proposed state.}
    \label{fig:teaser}
\end{figure}

\section{Method: \methodname{} for Self-Improving World Models}
\label{sec:method}

World models excel when grounded in action-labeled interaction data, yet collecting such data at scale is often prohibitively expensive. 
In this section, we present World Action Verifier (WAV), a self-improving framework that enables a world model to verify its own predictions and prioritize informative exploration. 
We first formalize the verification problem in a semi-supervised setting~(\cref{subsec:setting}), then decompose it into two more tractable subproblems~(\cref{subsec:verification}), and finally couple them into a goal-oriented exploration procedure for self-improvement~(\cref{subsec:selfimprove}).

\subsection{Preliminary: Semi-Supervised Verification of World Models}
\label{subsec:setting}

We consider a world model $f_\theta$ as an action-conditioned forward dynamics model,
$
\hat{s}^{t+1} = f_\theta(s^t, a^t),
$
where $s^t$ and $a^t$ are the state and action, or action chunk, at time $t$, and $\hat{s}^{t+1}$ is the predicted successor state.
Following recent training recipes~\citep{huang2025vid2world,gao2026dreamdojo}, we study a semi-supervised setting with two data sources: a small action-labeled robot interaction dataset
$
\mathcal{D}_{\mathrm{act}}=\{(s^t,a^t,s^{t+1})\}
$
and a large action-free video dataset
$
\mathcal{D}_{\mathrm{vid}}=\{(s^t,s^{t+1},\ldots)\}.
$
Typically, $\mathcal{D}_{\mathrm{vid}}$ spans a much broader range of state transitions than $\mathcal{D}_{\mathrm{act}}$.

Our goal is to improve $f_\theta$ not only on the narrow action distribution represented in $\mathcal{D}_{\mathrm{act}}$, but also on the broader transition support reflected in $\mathcal{D}_{\mathrm{vid}}$.
However, the lack of action labels in online videos poses a key challenge for \emph{action following}: rather than faithfully grounding predictions in the conditioning action, existing world models often hallucinate future states that may look visually plausible but are physically misaligned with the given action~\citep{shang2026worldarena,mei2026video}.
A natural remedy is to collect additional action-labeled robot interactions~\citep{jain2025smoothseaskilledsailor,guo2026vlaw,liu2026world}. 
Yet, since large-scale robot interaction data are costly to collect, a critical question arises: \emph{which specific interactions should be prioritized to improve the world model most effectively?}

Intuitively, transitions that the model can already predict accurately yield little new knowledge.
Instead, the data budget should be steered toward transitions that are likely to induce large prediction errors.
More formally, for a transition $(s^t,a^t,s^{t+1})$, we define the true
prediction error as
\begin{equation}
    \varepsilon(s^t,a^t; s^{t+1})
    :=
    \ell\!\left(f_\theta(s^t,a^t), s^{t+1}\right),
    \label{eq:pred_error}
\end{equation}
where $\ell(\cdot,\cdot)$ is a discrepancy measure in the state space.
Since the true successor state $s^{t+1}$ cannot be observed prior to execution, we aim to construct a \emph{verifier} $\hat{\varepsilon}(s^t, a^t, \hat{s}^{t+1})$ that estimates this error.
From an exploration standpoint, the verifier need not be perfectly calibrated, but it should preserve the relative ranking of prediction errors across candidate actions~\citep{houthooft2017vimevariationalinformationmaximizing,saravanan2026diffbed}.
That is, given two candidate actions $a_i^t$ and $a_j^t$, with predicted successor states $\hat{s}_i^{t+1}=f_\theta(s^t,a_i^t)$ and $\hat{s}_j^{t+1}=f_\theta(s^t,a_j^t)$, an effective verifier for exploration should satisfy
\begin{equation}
    \varepsilon(s^t,a_i^t; s_i^{t+1})
    <
    \varepsilon(s^t,a_j^t; s_j^{t+1})
    \;\Longrightarrow\; 
    \hat{\varepsilon}(s^t,a_i^t,\hat{s}_i^{t+1}) 
    < 
    \hat{\varepsilon}(s^t,a_j^t,\hat{s}_j^{t+1}).
    \label{eq:ranking_goal}
\end{equation}

\subsection{Two Complementary Factors of Verification}
\label{subsec:verification}

A common approach to verifying forward predictions in~\cref{eq:ranking_goal} is to leverage the internal knowledge of the world model itself, \eg, extracting epistemic uncertainty from a single model~\citep{pathak2017curiositydrivenexplorationselfsupervisedprediction} or measuring disagreement across multiple models~\citep{sekar2020planning,kim2020activeworldmodellearning}.
However, such verification methods often inherit the blind spots of the learned world model: they provide relatively reliable error estimates in well-explored regimes where the current world model is already accurate, but become much less reliable in under-explored regimes where accurate verification is most critical.

To overcome this issue, we take a different perspective: rather than directly verifying the overall correctness of a forward prediction, we decompose it into sub-conditions that are easier to verify.
More specifically, motivated by the Bayes decomposition,
\begin{equation}
    p(s^{t+1}\mid s^t,a^t)
    =
    \frac{p(a^t\mid s^t,s^{t+1})\,p(s^{t+1}\mid s^t)}{p(a^t\mid s^t)}
    \;\propto\;
    \underbrace{p(s^{t+1}\mid s^t)}_{\text{state}}\,
    \underbrace{p(a^t\mid s^t,s^{t+1})}_{\text{action}},
    \label{eq:bayes_decomp}
\end{equation}
we view a correct action-conditioned forward prediction as satisfying two complementary criteria:
\begin{itemize}[nosep]
    \item \textit{State Plausibility}: whether the predicted next state is plausible under the environment dynamics.
    \item \textit{Action Reachability}: whether the transition from $s^t$ to $s^{t+1}$ is consistent with the given action.
\end{itemize}
From this view, a correct prediction should both lie on the manifold of plausible futures and be reachable under the specified action.
Crucially, each condition admits a verification strategy that is more tractable than predicting high-dimensional forward dynamics, which we will describe next. 

{\bf State verification via distribution asymmetry.}
One common failure mode of high-dimensional forward prediction is poor visual plausibility.
For example, a world model post-trained on limited robot interaction data may partially forget the general dynamics learned from video pretraining, resulting in blurry or inconsistent rollouts~\citep{zhang2026physion}.
To detect such errors, we build a state verifier from the first factor in~\cref{eq:bayes_decomp}, namely a state transition prior $g_\phi(s^{t+1}\mid s^t)$.
Since this prior does not condition on actions, it can be trained not only on action-labeled robot interactions $\mathcal{D}_{\mathrm{act}}$, but also on the much larger action-free video dataset $\mathcal{D}_{\mathrm{vid}}$.
Moreover, as the prediction error of a world model often compounds over longer horizons, we instantiate $s^{t+1}$ as a future subgoal reached after an action chunk rather than as the immediate next frame.
After training, $g_\phi$ allows us to sample a diverse set of $K$ plausible future subgoals for state verification,
\begin{equation}
    \{\tilde{s}^{t+1}_k\}_{k=1}^K \sim g_\phi(\cdot \mid s^t).
    \label{eq:subgoal_generation}
\end{equation}

{\bf Action verification via dimensionality asymmetry.}
State plausibility alone does not imply correct forward prediction: for downstream policy use, predicted transitions must also be reachable under the specified action.
To assess this condition, we build another verifier from the second factor in~\cref{eq:bayes_decomp}, namely an inverse dynamics model $h_\psi(a^t\mid s^t,s^{t+1})$ that infers which action could connect a current state to a future state.
While the inverse model $h_\psi$ is action-dependent and cannot leverage more training data than the forward world model, it benefits from a lower effective dimensionality in both its outputs and its relevant inputs.
For instance, in visually complex scenes, an action typically affects only one object at a time rather than the entire scene.
Inspired by the observation that models attending to a compact set of causally relevant features often generalize better~\citep{liu2023causal,wang2025dyn}, we explicitly impose a learnable sparsity mask $M$ in the inverse dynamics model:
\begin{equation}
    \hat{a}^{t} = h_\psi\!\left( M \odot s^{t},\; M \odot s^{t+1} \right).
    \label{eq:sparse_inverse_def}
\end{equation}
As illustrated in~\cref{fig:decomposition}, the subgoal generator $g_\phi$ and inverse model $h_\psi$ provide two complementary components for verification: the former checks whether a candidate future is plausible, while the latter checks whether it is reachable through an inferred action.
\ifnips
    \begin{figure}[t]
    \centering
    \begin{minipage}[t]{0.45\linewidth}
        \vspace{-6pt} 
        \centering
        \includegraphics[width=0.95\linewidth]{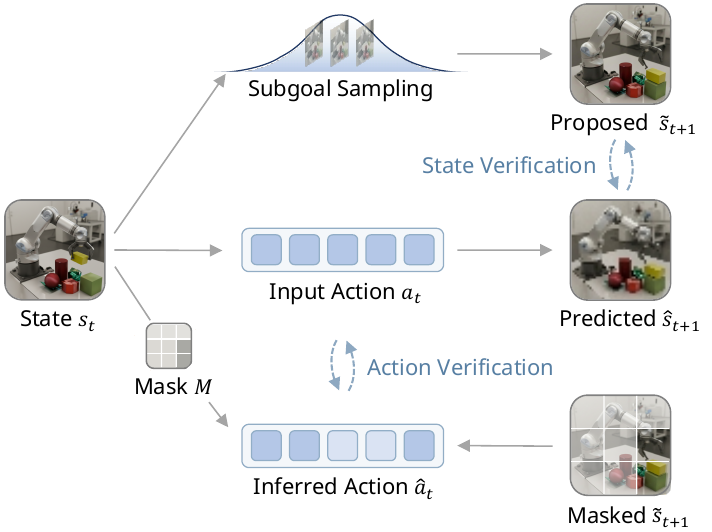}
        \vspace{-6pt}
        \caption{Decomposing model verification into state plausibility and action reachability.}
        \label{fig:decomposition}
    \end{minipage}
    \hfill
    \begin{minipage}[t]{0.525\linewidth}
        \vspace{0pt} 
            
        \vspace{-4pt}
        \rule{\linewidth}{0.8pt}
        \captionsetup{
          justification=raggedright,
          singlelinecheck=false
        }
        \captionof{algorithm}{\small WAV-Guided Exploration.}
        \vspace{-4pt}
        \label{alg:exploration}
        \rule{\linewidth}{0.4pt}
    
\begin{lstlisting}[basicstyle=\fontsize{7.75pt}{8pt}\ttfamily, xleftmargin=-2pt]
# f: world model, h: inverse model
# s: current state, g: subgoal generator
# D: current data, K: number of candidates

for each exploration iteration:
  s_g = v.sample(s, K)     # subgoals
  a = h.inverse(s, s_g)    # actions
  s_p = f.predict(s, a)    # outcomes
  scores = dist(s_g, s_p)  # disagreement
  idx = argmax(scores)     # max surprise
  s_n = env.step(a[idx])
  D.append((s, a[idx], s_n)) 
  f.update(D), h.update(D)   
\end{lstlisting}
    
    \vspace{-6pt}
    \rule{\linewidth}{0.8pt}
    \end{minipage}
    
    \end{figure}
    
\else
    \begin{figure}[t]
    \centering
    \begin{minipage}[t]{0.475\linewidth}
        \vspace{-6pt} 
        \centering
        \includegraphics[width=0.90\linewidth]{asserts/analysis_5.pdf}
        \vspace{-6pt}
        \caption{Decomposing world model verification into state plausibility and action reachability.}
        \label{fig:decomposition}
    \end{minipage}
    \hfill
    \begin{minipage}[t]{0.50\linewidth}
        \vspace{0pt} 
            
        \vspace{-4pt}
        \rule{\linewidth}{0.8pt}
        \captionof{algorithm}{\small WAV-Guided Exploration.}
        \vspace{-4pt}
        \label{alg:exploration}
        \rule{\linewidth}{0.4pt}
    
        \begin{lstlisting}
# s: current state, f: world model
# v: subgoal generator, h: inverse model
# D: current data, K: number of candidates

for each exploration iteration:
  s_g = v.sample(s, K)       # subgoals
  a = h.inverse(s, s_g)      # actions
  s_p = f.predict(s, a)      # outcomes
  scores = dist(s_g, s_p)    # disagreement
  idx = argmax(scores)       # max surprise
  s_n = env.step(a[idx])     # collect data
  D.append((s, a[idx], s_n)) 
  f.update(D), h.update(D)   
\end{lstlisting}
    
    \vspace{-6pt}
    \rule{\linewidth}{0.8pt}
    \end{minipage}
    
    \end{figure}

\fi

\subsection{Goal-Oriented Exploration}
\label{subsec:selfimprove}

Given the two verification criteria above, we next couple them into a verification-driven exploration algorithm.
A straightforward design is action-oriented exploration: sample candidate actions, roll out the forward world model $f_\theta$, and then use the inverse model $h_\psi$ to check whether the generated state transitions recover the original actions~\citep{ye2025reinforcement}.
However, this ordering can be brittle in practice.
Among the three components, the forward world model $f_\theta$ is often the least reliable when action-labeled data are scarce.
As such, errors introduced by the initial forward rollout can produce off-manifold states, on which the subsequent inverse model also becomes unreliable.

We therefore pursue a goal-oriented alternative that places the forward world model as the final step in the verification cycle.
At each time step $t$, we first sample plausible subgoals from the transition prior, infer actions that could reach those subgoals, and only then verify whether the action-conditioned world model can realize them:
\begin{equation}
    s^t
    \fixedxrightarrow{\,g_\phi\,}
    \tilde{s}_{1:K}^{t+1}
    \fixedxrightarrow{\,h_\psi\,}
    \hat{a}_{1:K}^{t}
    \fixedxrightarrow{\,f_\theta\,}
    \hat{s}_{1:K}^{t+1}
    \fixedxrightarrow{\,\ell\,}
    \hat{\varepsilon}_{1:K}
    \fixedxrightarrow{\,\max\,}
    a^{\star}
    .
\end{equation}
For each candidate subgoal $\tilde{s}_{k}^{t+1}$, we measure how far the forward rollout $\hat{s}_k^{t+1}$ deviates from it.
In practice, this discrepancy $\ell$ can be computed in the discrete state space, a continuous representation space, or a diffusion noise space, depending on the parameterization of the world model.

As summarized in~\cref{alg:exploration}, we execute $a^\star$ associated with the largest discrepancy at each time step, add the resulting transition to $\mathcal{D}_{\mathrm{act}}$, and iteratively update both the forward world model and the inverse dynamics model.
By verifying multiple candidate rollouts in parallel before acting, our method reduces unnecessary real-world interaction and thereby effectively trades scalable computation for improved data efficiency.

\section{Theory: When Does Inverse Verification Outperform Forward Prediction?}
\label{subsec:sample_complexity}

Our method in~\cref{sec:method} rests on a central premise: verifying actions through inverse dynamics can be substantially easier than predicting full future states with a forward world model. 
In this section, we provide a minimal theoretical analysis characterizing the conditions under which this forward--inverse asymmetry becomes most pronounced.

To isolate the key factors, we consider a linear--Gaussian setting with the observed state $s \in \R^{d_s}$ and action $a \in \R^{d_a}$.
Assume one-step dynamics
\begin{equation}
    s' = As + Ba + \xi,
    \qquad
    \xi \sim \mathcal{N}(0,\sigma_s^2 \Id_{d_s}),
    \label{eq:analysis_fwd_main}
\end{equation}
where $\sigma_s$ captures transition stochasticity. We further assume that the action is recoverable from a low-dimensional action-relevant slice $z := Ms \in \R^{d_z}$ with $d_z \lll d_s$:
\begin{equation}
    a = h(z,z') + \eta,
    \qquad
    h(z,z') := H
    \begin{bmatrix}
    z \\
    z'
    \end{bmatrix},
    \qquad
    \eta \sim \mathcal{N}(0,\sigma_a^2 \Id_{d_a}),
    \label{eq:analysis_inv_main}
\end{equation}
where $z' := Ms'$ and $\sigma_a$ measures irreducible ambiguity in recovering the action from $(z,z')$.

We compare a dense forward model $f_\theta$ trained on $[s;a] \in \R^{d_s+d_a}$ against a sparse inverse model $h_\psi$ trained on $[z;z'] \in \R^{2d_z}$, both fit by ordinary least squares (OLS) on $n$ transitions from $\mathcal{D}_{\mathrm{act}}$. To compare them in the same units, we evaluate in state space:
\begin{equation}
    \mathcal{E}_F := \frac{1}{d_s}\E\!\left[\|f_\theta(s,a)-f(s,a)\|_2^2\right],
    \quad
    \mathcal{E}_I := \frac{1}{d_s}\E\!\left[\|f(s,h_\psi(z,z'))-f(s,h(z,z'))\|_2^2\right],
    \label{eq:analysis_metrics_main}
\end{equation}
where $f(s,a)$ denotes the true dynamics.  Since inverse errors enter the state prediction through the action channel, we define
$\lambda := \|B\|_{\op}$ as the worst-case amplification from action error to state error.

\begin{proposition}[Informal]
\label{prop:forward_inverse_gap_informal}
Under the stylized setup above, if both models are fit by OLS on $n$ labeled transitions, then
\begin{equation}
\frac{\E[\mathcal{E}_F]}{\E[\mathcal{E}_I]}
\;\ge\;
\underbrace{\Bigl(\frac{d_s+d_a}{2d_z}\cdot\frac{d_s}{d_a}\Bigr)}_{\text{dimensionality}}
\cdot
\underbrace{\Bigl(\frac{\sigma_s}{\lambda\,\sigma_a}\Bigr)^2}_{\text{stochasticity}}
\cdot
\underbrace{\Bigl(\frac{n-2d_z-1}{n-(d_s+d_a)-1}\Bigr)}_{\text{sample size}},
\label{eq:analysis_gap_main}
\end{equation}
provided $n > d_s+d_a+1$ and $n > 2d_z+1$. The exact statement and proof are in Appendix~\ref{app:sample_complexity}.
\end{proposition}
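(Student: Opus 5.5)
The plan is to compute both expected errors exactly with the classical OLS risk formula under Gaussian random design, and then take the ratio. Throughout, the inputs are assumed jointly Gaussian with zero mean. Write $x=[s;a]\in\R^{p_F}$ with $p_F=d_s+d_a$ for the forward model. For the inverse model, write $w=[z;z']\in\R^{p_I}$ with $p_I=2d_z$.

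\textbf{Forward error.} The forward model regresses each of the $d_s$ coordinates of $s'$ on $x$, and each coordinate has noise variance $\sigma_s^2$. For OLS on $n$ i.i.d.\ Gaussian designs, the out-of-sample excess risk of one response coordinate is $\sigma_s^2\,\E\operatorname{tr}(\Sigma_x (X^\top X)^{-1})$. Because $X^\top X$ is Wishart, this equals $\sigma_s^2\,p_F/(n-p_F-1)$, which is the inverse-Wishart mean and is finite when $n>p_F+1$. The covariance $\Sigma_x$ cancels, so no assumption on the state distribution beyond Gaussianity is needed. Summing over the $d_s$ coordinates and dividing by $d_s$ gives
\[
\E[\mathcal{E}_F]=\frac{\sigma_s^2 (d_s+d_a)}{n-(d_s+d_a)-1}.
\]

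\textbf{Inverse error.} Since $f$ is linear, $f(s,h_\psi)-f(s,h)=B(\hat H-H)w$, so $\|f(s,h_\psi)-f(s,h)\|^2\le \lambda^2\|(\hat H-H)w\|^2$. The inverse regression has $d_a$ output coordinates, each with noise $\sigma_a^2$. The same Wishart computation then gives $\E\|(\hat H-H)w\|^2=\sigma_a^2 d_a\,p_I/(n-p_I-1)$. Hence
\[
\E[\mathcal{E}_I]\le \frac{\lambda^2\sigma_a^2\, d_a\, 2d_z}{d_s\,(n-2d_z-1)}.
\]
Dividing the forward expression by this upper bound produces exactly the three factors in \eqref{eq:analysis_gap_main}: the dimensionality factor $\tfrac{d_s+d_a}{2d_z}\cdot\tfrac{d_s}{d_a}$, the stochasticity factor $\sigma_s^2/(\lambda^2\sigma_a^2)$, and the sample-size factor. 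The inequality direction is correct because only $\mathcal{E}_I$ was upper-bounded.

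\textbf{Main obstacle.} The delicate step is justifying the inverse-regression risk formula, because the model \eqref{eq:analysis_inv_main} must be a well-specified regression. This requires $\eta$ to be independent of $w=(z,z')$, with $w$ Gaussian and nondegenerate. That is not automatic: $z'$ depends on $a$ through \eqref{eq:analysis_fwd_main}, so the inverse model cannot literally be derived from the forward one. The exact statement should therefore posit \eqref{eq:analysis_inv_main} as a generative assumption on the joint law of $(z,z',a)$, with $\eta\perp w$ and $w\sim\mathcal N(0,\Sigma_w)$ full rank, and fit the forward model on $(s,a)$ drawn compatibly. I would state this explicitly as a condition of the exact proposition in Appendix~\ref{app:sample_complexity}.

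\textbf{Supporting details.} Two further points need to be checked in the write-up:
\begin{itemize}
\item Conditioning on the design and then taking expectations is valid: the OLS estimator is unbiased conditional on $X$, and its conditional covariance is $\sigma^2 (X^\top X)^{-1}$.
\item The test point is independent of the training sample, which is what makes $\E\operatorname{tr}(\Sigma (X^\top X)^{-1})=p/(n-p-1)$ follow from $\E[(X^\top X)^{-1}]=\Sigma^{-1}/(n-p-1)$.
\end{itemize}
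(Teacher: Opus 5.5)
Your proposal is correct and follows the paper's proof. The per-coordinate OLS risk $\nu^2 D/(n-D-1)$ gives both $\E[\mathcal{E}_F]$ and the action-space inverse error exactly, the factor $\|B\|_{\op}^2$ carries the latter into state space as an upper bound, and dividing yields the three factors; your inverse-Wishart computation for general $\Sigma$ simply proves the cited lemma without the paper's whitening assumption. On your flagged obstacle: under your own assumption that $(s,a)$ is jointly Gaussian and $\xi$ is independent, $\E[a\mid z,z']$ is automatically linear and $a-\E[a\mid z,z']$ is automatically Gaussian and independent of $(z,z')$, so well-specifiedness comes for free and the only extra posit (which the paper makes implicitly) is that this residual has covariance $\sigma_a^2\Id_{d_a}$.
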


{\bf Interpretation.}
The ratio in \eqref{eq:analysis_gap_main} factorizes into three terms.
(1) \emph{Dimensionality}: the forward model must estimate a map from $d_s+d_a$ inputs, whereas the sparse inverse uses only $2d_z$.
(2) \emph{Stochasticity}: forward prediction suffers from environment noise $\sigma_s$, while inverse verification suffers only from action-recovery ambiguity $\sigma_a$ (scaled by $\lambda$).
(3) \emph{Sample size}: when $n$ is only modestly larger than $d_s+d_a$, the forward estimator is far less stable.
In practice, \methodacro helps most when
(i) the verifier needs only a small agent-centric subset while the world model predicts a large scene (\emph{large $d_s/d_z$});
(ii) uncontrolled dynamics inflate $\sigma_s$ while the action imprint stays clean (\emph{large $\sigma_s/\sigma_a$}); and
(iii) action-labeled data are limited (\emph{small $n$}).
We validate each factor empirically in \cref{sec:robustness_of_self_verification}: varying the data budget isolates the sample-size term, increasing the number of objects raises the effective state dimension $d_s$, and adding noisy floors inflates $\sigma_s$ while leaving $\sigma_a$ unchanged.

\section{Experiments}
\label{sec:experiments}
In this section, we evaluate the central claims of WAV. We begin by validating whether inverse verification, especially with sparsity, is more robust and easier to learn than action-conditioned forward prediction under limited data and distribution shift. We then examine whether WAV can effectively improve world model learning through self-improvement, and finally, whether the resulting gains translate into better downstream policy learning and enhance out-of-distribution generalization. 
Concretely, we study the following research questions:
\begin{itemize}
    \item {\textit{RQ1}}: Is learning an inverse dynamics model easier than learning a forward world model?
    \item {\textit{RQ2}}: Do sparse IDMs generalize better than vanilla IDMs to unseen objects or interactions?
    \item {\textit{RQ3}}: How effective is forward--inverse asymmetry for self-improving the world model? 
    \item {\textit{RQ4}}: Does this self-improvement translate into improved downstream policy learning?
    \item {\textit{RQ5}}: Can the learned world model adapt to out-of-distribution robotic manipulation settings with limited target-domain data under novel visual setups, objects, and interactions?

\end{itemize}
    
\ifnips
\else

To address RQ1 and RQ2, we conduct controlled experiments in MiniGrid~\citep{chevalier2023minigrid}, with a particular emphasis on out-of-distribution generalization to unseen objects and novel interactions. 
Building on these findings, we then evaluate the proposed framework in more complex settings, assessing how it enables self-improvement of world models quality (\textit{RQ3}) on both MiniGrid and simulated robotic manipulation tasks (RoboMimic~\citep{zhu2020robosuite} and ManiSkill~\citep{mu2maniskill}), and how these enhance downstream policy learning performance (\textit{RQ4}) on robotic manipulation tasks. Finally, we further evaluate whether the learned world model can adapt to out-of-distribution robotic settings with limited target-domain data (\textit{RQ5}), considering both visual appearance shifts and more challenging interaction distributions.
\fi

\textbf{Baselines.}
We compare our method (\cref{sec:method}) against the following exploration strategies:
\begin{itemize}
    \item \textit{Random:} randomly samples candidate interactions, serving as a lower-bound exploration strategy.
    \item \textit{Uncertainty:} select candidates with the highest predictive uncertainty~\citep{sekar2020planning}.
    \item \textit{Progress:} selects candidates with the largest learning progress, measured by the disagreement between two consecutive world models during training~\citep{kim2020activeworldmodellearning}.
    \item \textit{Vanilla IDM}: our method without the sparsity mask $M$ in~\cref{eq:sparse_inverse_def}.
    \item \textit{Oracle:} selects candidates with the largest world-model prediction loss computed using ground-truth successor states, serving as an upper bound.
\end{itemize}

\ifnips
    \begin{figure}[t]
        \centering
        \makebox[\linewidth][c]{%
            \includegraphics[width=0.45\linewidth]{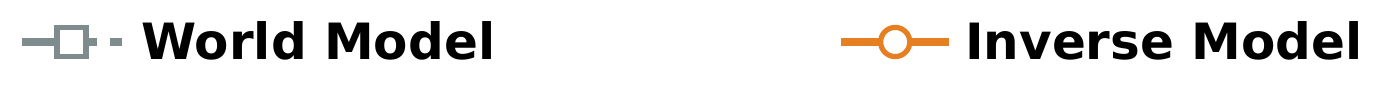}%
        }
        \vspace{-4mm}
        
        \includegraphics[height=3.3cm]{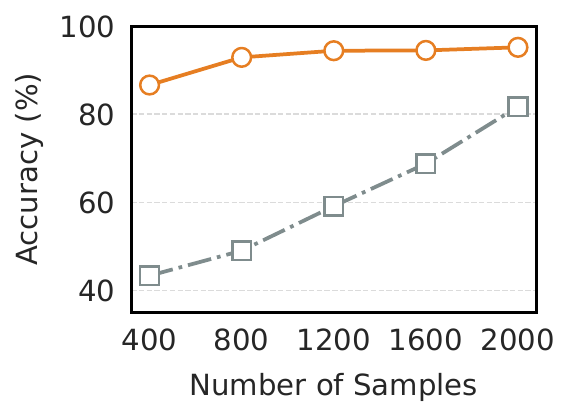}\hfill
        \includegraphics[height=3.3cm]{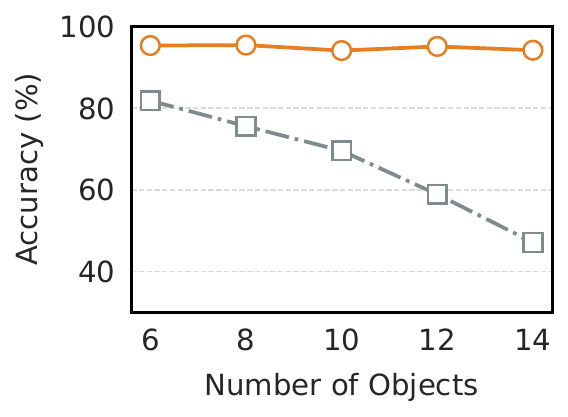}\hfill
        \includegraphics[height=3.3cm]{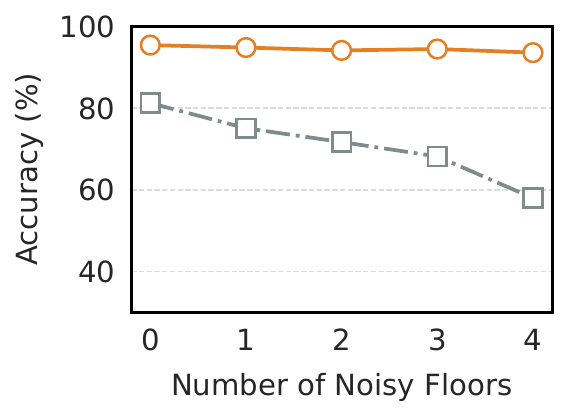}
        
        \caption{
        \textbf{Robustness verification of WAV on MiniGrid.}
        {(Left)} Sample efficiency comparison between Sparse IDM and the World Model with six objects.
        {(Mid)} Robustness to increasing state complexity.
        {(Right)} Robustness to growing environment stochasticity.
        }
        \label{fig:minigrid_results_1}
    \end{figure}
\else
    \begin{figure}[t]
        \centering
        \makebox[\linewidth][c]{%
            \includegraphics[width=0.45\linewidth]{asserts/00_legend_abc.pdf}%
        }
        \vspace{-4mm}
        
        \includegraphics[height=4.0cm]{asserts/01_sample_efficiency.pdf}\hfill
        \includegraphics[height=4.0cm]{asserts/02_complexity.pdf}\hfill
        \includegraphics[height=4.0cm]{asserts/03_noise.pdf}
        
        \caption{
        \textbf{Verification of robustness of WAV on MiniGrid.}
        {(Left)} Sample efficiency comparison between Sparse IDM and the World Model with six objects.
        {(Mid)} Robustness to increasing state complexity.
        {(Right)} Robustness to growing environment stochasticity.
        }
        \label{fig:minigrid_results_1}
    \end{figure}
\fi

\subsection{Experiments in Synthetic MiniGrid}

\textbf{Dataset.}
We collect 50k interaction sequences from three custom MiniGrid tasks: \texttt{Key Delivery}, \texttt{Ball Delivery}, and \texttt{Object Matching}, using a deterministic policy for each task.
Half of the sequences are used to train the action-free subgoal generator as a video prior.
The remaining data form an exploration pool, consisting of an action-labeled seed set of 200 sequences and an unlabeled candidate set of 20k sequences for acquisition.
To enable controlled evaluation, we construct additional random-play datasets by varying object counts and environmental stochasticity.
Specifically, we vary the number of objects to study generalization under increasing scene complexity, and introduce noisy floor tiles whose colors change after each action to simulate stochastic observations.
Data samples from these noisy environments are used exclusively for robustness evaluation.
Additional details are provided in Appendix~\ref{sec:minigrid_setting}.

\subsubsection{Robustness of World Action Verification }
\label{sec:robustness_of_self_verification}
To test whether WAV is a reliable verifier, we compare forward world models and inverse dynamics models under controlled distribution shifts, addressing \textit{\textbf{RQ1}} and \textit{\textbf{RQ2}}.

\textbf{Setup.}
We vary the amount of labeled training data $\{400,800,1200,1600,2000\}$ collected in environments with $6$ objects, and evaluate on test transitions in environments with $\{6,8,10,12,14\}$ objects.
In addition, to examine robustness against observation noise, we construct training datasets in $6$-object environments with $\{0,1,2,3,4\}$ noisy floors.
For direct comparison, we convert inverse model predictions into next state predictions: for each test pair, the IDM predicts an action, which we then execute in the simulator to obtain the induced next state. We report the same dynamics accuracy defined in Appendix~\ref{sec:minigrid_metrics} for both the world model and the IDM-induced transition.


\textbf{Results.}
For \textbf{\textit{RQ1}}, Figure~\ref{fig:minigrid_results_1} empirically validates the three factors identified in Proposition~\ref{prop:forward_inverse_gap_informal}, each isolated by a separate controlled variable.
Figure~\ref{fig:minigrid_results_1} (Left) isolates the \emph{sample-size} factor: across data regimes, action inference using IDMs consistently outperforms learning action-conditioned world models, with the performance gap being most pronounced in the low-data regime, consistent with the diverging finite-sample term in \eqref{eq:analysis_gap_main} when $n$ is only modestly larger than $d_s+d_a$.
Figure~\ref{fig:minigrid_results_1} (Mid) isolates the \emph{dimensionality} factor: increasing the number of objects raises the effective state dimension $d_s$ while leaving the action-relevant subset $d_z$ unchanged. The WM's performance degrades rapidly as state complexity increases, whereas the IDM remains stable, consistent with the growing ratio $(d_s+d_a)/2d_z$ in the dimensionality term.
Figure~\ref{fig:minigrid_results_1} (Right) isolates the \emph{stochasticity} factor: noisy floor tiles inflate observation noise $\sigma_s$ while the action imprint on the agent-centric features remains clean ($\sigma_a$ unchanged). The WM exhibits clear sensitivity to the induced observation noise, whereas the IDM maintains largely invariant performance, consistent with the $(\sigma_s/\lambda\sigma_a)^2$ ratio in the stochasticity term.
These results confirm the predicted advantage of sparse inverse verification across all three factors, providing empirical justification for \textit{using the IDM as a reliable verifier of the world model}.

For \textbf{\textit{RQ2}}, Figure~\ref{fig:minigrid_results_2} (Left) reveals a pronounced divergence in out-of-distribution generalization when data is limited. The vanilla IDM struggles on \texttt{toggle} and \texttt{swap}, whereas the sparse IDM maintains strong performance, showing that enforcing sparsity promotes more robust action inference.

\begin{figure}[t]
    \centering

    \includegraphics[width=0.3\linewidth]{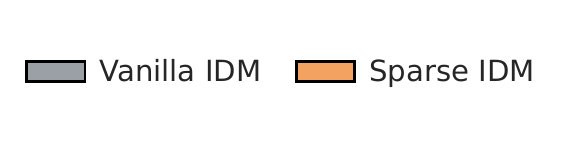}%
    \quad
    \makebox[0.6\linewidth][l]{\hspace{0.04\linewidth}\includegraphics[width=0.6\linewidth]{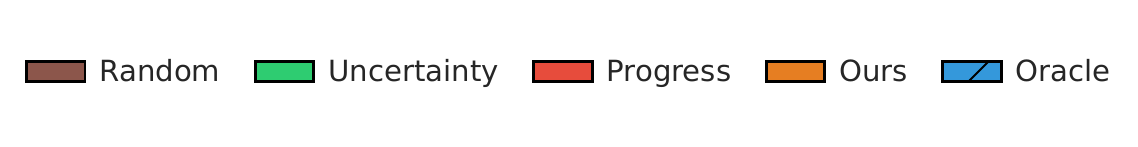}}
    \vspace{-14pt}

    \begin{minipage}[t]{0.32\linewidth}
        \centering
        \includegraphics[width=\linewidth]{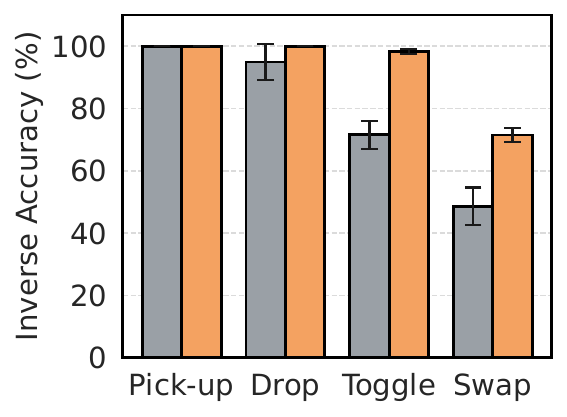}
    \end{minipage}\hfill
    \begin{minipage}[t]{0.32\linewidth}
        \centering
        \includegraphics[width=\linewidth]{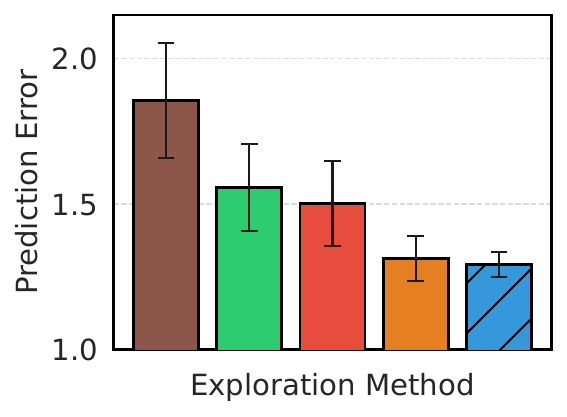}
    \end{minipage}\hfill
    \begin{minipage}[t]{0.32\linewidth}
        \centering
        \includegraphics[width=\linewidth]{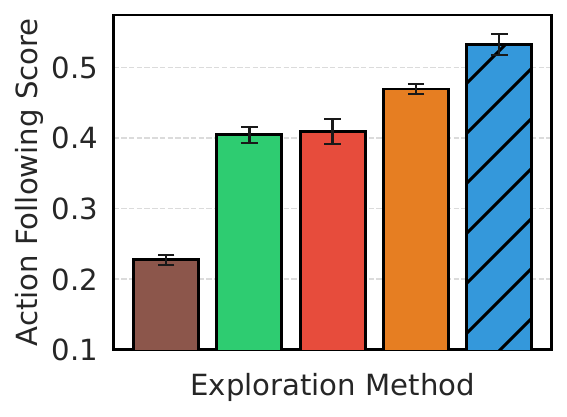}
    \end{minipage}
    
    \caption{
    \textbf{Evaluation of world model learning with WAV on MiniGrid.}
    {(Left)} Prediction accuracy of Sparse IDM and Vanilla IDM. 
    {(Mid)} Comparison of different exploration methods. 
    {(Right)} Robustness of action following across different methods. 
    Results are averaged across 5 seeds.
    }
    \label{fig:minigrid_results_2}
\end{figure}

\subsubsection{Effectiveness of World Model Learning}
\label{sec:efficiency_of_self_exploration}
Building on the above justifications, we now evaluate \textbf{\textit{RQ3}} by examining whether the proposed framework improves world model learning quality.

\textbf{Setup.}
We first train a base model on 200 uniformly sampled labeled transitions, followed by three exploration rounds where each strategy acquires a budget of 100 transitions. We report the prediction error averaged over five random seeds, focusing on the second round where differences are most pronounced for clearer comparison.
To further assess whether the learned models capture action-dependent dynamics, we additionally evaluate an \textit{Action Following Score} (defined in Appendix~\ref{sec:action_following}), using the models obtained from the same active learning round.

\textbf{Results.}
Figure~\ref{fig:minigrid_results_2} (Mid) shows that our method and the Oracle achieve the best performance in terms of prediction error. Consistently, Figure~\ref{fig:minigrid_results_2} (Right) demonstrates that our method also attains the highest Action Following Score, indicating superior modeling of action-dependent dynamics.
We attribute this advantage to the structural imbalance of the dataset, where complex interaction actions are sparse relative to simple movement. The \textit{Random} strategy fails to sample these critical sparse events with sufficient frequency. The \textit{Progress} method struggles with sample redundancy; it tends to over-prioritize transitions where the model is already competent, yielding negligible marginal information gain. Similarly, the \textit{Uncertainty} baseline suffers from a slow warm-up, leading to suboptimal performance in the early stages of exploration.
In contrast, our method prioritizes transitions with high disagreement between the video prior and world model predictions, naturally favoring sparse interaction actions under the same data budget. Moreover, the improved Action Following Score suggests that the learned model better preserves distinctions between different actions, rather than collapsing them into similar predictions. Qualitative results are given in Appendix~\ref{sec:Qualitative Results of MiniGrid}.

\ifnips
\else
\begin{figure}[t]
    \centering
\includegraphics[width=.92\linewidth]{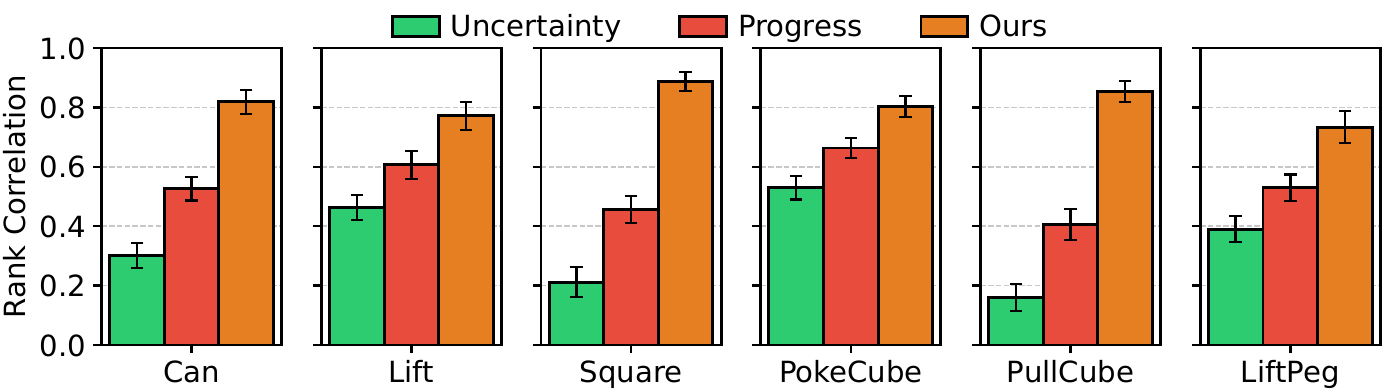}
    \caption{\textbf{Robustness verification of WAV on Robomimic and ManiSkill. }Correlation with Oracle ranking. We evaluate how well each method orders informative samples by computing Spearman rank correlations between the method’s assigned scores and Oracle scores on RoboMimic and ManiSkill environments. Higher correlation indicates closer agreement with the Oracle’s ranking.}
    \label{fig:corr_robo}
\end{figure}

\fi

\subsection{Experiments on Simulated Robot Manipulations}


\textbf{Datasets \& Setups.} We consider a set of challenging robotic manipulation tasks from two evaluation suites: RoboMimic~\citep{zhu2020robosuite} (\texttt{Lift}, \texttt{Can}, \texttt{Square}) and ManiSkill~\citep{mu2maniskill} (\texttt{PullCube}, \texttt{PokeCube}, \texttt{LiftPeg}).
For both suites, we curate training data using expert demonstrations in a two-stage process. We first pretrain diffusion policies~\citep{chi2025diffusion} for different numbers of training steps, yielding a diverse collection of behavior trajectories with varying levels of optimality. Based on these trajectories, we partition the data into \textit{two subsets}: (1) \textit{the warm-up dataset}, which includes the expert demonstrations together with on-policy trajectories collected from the best-performing diffusion policy checkpoint trained on those demonstrations; (2) \textit{the exploration dataset}, which consists of trajectories generated by imperfect diffusion policy checkpoints, capturing diverse exploratory behaviors.

\begin{figure*}[t]
    \centering
    \includegraphics[width=1.0\linewidth]{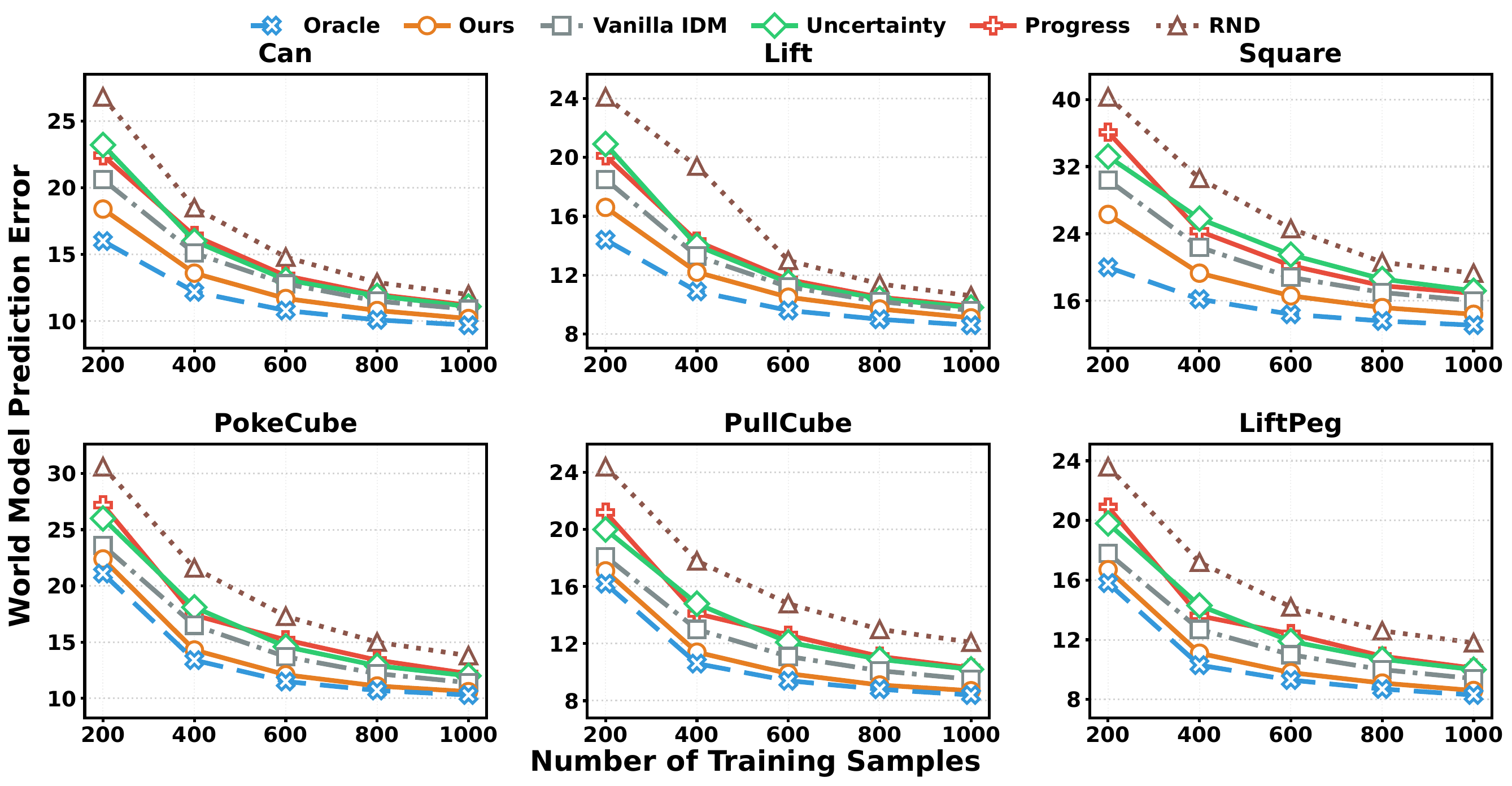}
    \caption{\textbf{World-model learning on RoboMimic and ManiSkill.}
    We report 32-frame prediction error (MSE) as the number of training trajectories increases over 3 seeds.}
    \label{fig:wm_pred_robots}
\end{figure*}


\textbf{Model Choices.} For the world model, we adopt Dreamer-v3~\citep{hafner2023mastering}, which learns a latent recurrent state-space model (RSSM). 
For sparse IDM, we employ the model backbones from CLAM~\citep{liang2025clam} and further impose sparsity on this latent action space. Details are given in Appendix~\ref{sec:idm_architecture}. 

\ifnips
\else
\ifnips
\subsection{Robustness of World Action Verification}
\else
\subsubsection{Robustness of World Action Verification}
\fi

\textbf{Setup.}
To evaluate the robustness of WAV, similar to the evaluation in MiniGrid, we compute the Spearman’s rank correlation coefficient~\cite{spearman1961proof}
between the data selection scores of each method and
those of the Oracle method. We use 100 samples that are held out from the world model training data.

\textbf{Results.} As shown in Figure~\ref{fig:corr_robo}, our verification scores more faithfully reflect the true (oracle) difficulty ranking of samples, verifying the robustness of WAV. 

\subsection{Full Results on World Model Prediction}
Table~\ref{tab:wm_error_200} reports the prediction error under the 200-sample budget. 
Ours consistently achieves the lowest error among all non-Oracle methods across the six tasks, suggesting that inverse-verification-based data selection provides a more effective adaptation signal than uncertainty-, progress-, or novelty-based acquisition. 
The improvement is especially clear when compared with non-IDM baselines, where Ours is significantly better across all tasks under a Wilcoxon signed-rank test~\citep{conover1999practical} at the $5\%$ level. 
Compared with Vanilla IDM, Ours also further reduces prediction error, indicating that the verifier is not only useful as an inverse-dynamics filter, but also helps select transitions that are more informative for action-conditioned world-model adaptation. 
\begin{table*}[h]
\centering
\caption{World model prediction error with 200 training samples. 
Results are reported as mean $\pm$ standard error. 
The best non-Oracle result for each task is highlighted in light yellow. 
$^\dagger$ indicates that the best non-Oracle method is likely significantly better than the second-best non-Oracle/non-IDM method at the 5\% level.}
\label{tab:wm_error_200}
\setlength{\tabcolsep}{5.5pt}
\renewcommand{\arraystretch}{1.12}
\resizebox{\textwidth}{!}{
\begin{tabular}{lcccccc}
\toprule
\textbf{Method} 
& \textbf{Can} 
& \textbf{Lift} 
& \textbf{Square} 
& \textbf{PokeCube} 
& \textbf{PullCube} 
& \textbf{LiftPeg} \\
\midrule
RND 
& $26.5 \pm 0.7$
& $23.3 \pm 0.8$
& $39.7 \pm 0.8$
& $30.1 \pm 0.9$
& $24.6 \pm 0.8$
& $23.4 \pm 0.7$ \\
Uncertainty 
& $22.9 \pm 0.6$
& $20.7 \pm 0.6$
& $33.0 \pm 0.7$
& $25.2 \pm 0.7$
& $19.9 \pm 0.5$
& $19.4 \pm 0.6$ \\
Progress 
& $22.0 \pm 0.6$
& $19.6 \pm 0.6$
& $35.1 \pm 0.7$
& $27.3 \pm 0.7$
& $21.1 \pm 0.6$
& $20.5 \pm 0.8$ \\
Vanilla IDM 
& $20.2 \pm 0.2$
& $18.3 \pm 0.5$
& $29.8 \pm 0.7$
& $23.4 \pm 0.6$
& $18.2 \pm 0.5$
& $17.5 \pm 0.5$ \\
Ours 
& \cellcolor{bestyellow}$18.2 \pm 0.5^{\dagger}$
& \cellcolor{bestyellow}$16.4 \pm 0.4^{\dagger}$
& \cellcolor{bestyellow}$26.0 \pm 0.3^{\dagger}$
& \cellcolor{bestyellow}$22.4 \pm 0.5^{\dagger}$
& \cellcolor{bestyellow}$17.1 \pm 0.4^{\dagger}$
& \cellcolor{bestyellow}$16.8 \pm 0.5^{\dagger}$ \\
\bottomrule
\end{tabular}
}
\end{table*}
\subsection{Full Results on Policy Learning}
\label{sec:full_results_policy}
We provide the full downstream policy learning results in Figure~\ref{fig:rewards}. 
For each learned world model, we refine the same base diffusion policy through imagination-based search following the SAILOR protocol~\citep{jain2025smoothseaskilledsailor}, and report the resulting task reward after fine-tuning with a fixed data budget of 1,000 trajectories. 
Across RoboMimic and ManiSkill tasks, policies refined with WAV-based world models achieve higher rewards than those refined with baseline world models, and are second only to the oracle model that uses privileged ground-truth actions for sample selection. 
The improvement is most pronounced on tasks with ambiguous or contact-rich dynamics, such as \texttt{Can}, \texttt{Square}, and \texttt{PokeCube}, where accurate action-conditioned latent dynamics are important for effective imagination-based policy refinement. 
In contrast, simpler tasks such as \texttt{Lift} show smaller gaps across methods, suggesting that downstream policy gains are most sensitive to world-model quality when the task requires resolving more complex object interactions.
\begin{figure}[t]
    \centering
    \includegraphics[width=1.0\linewidth]{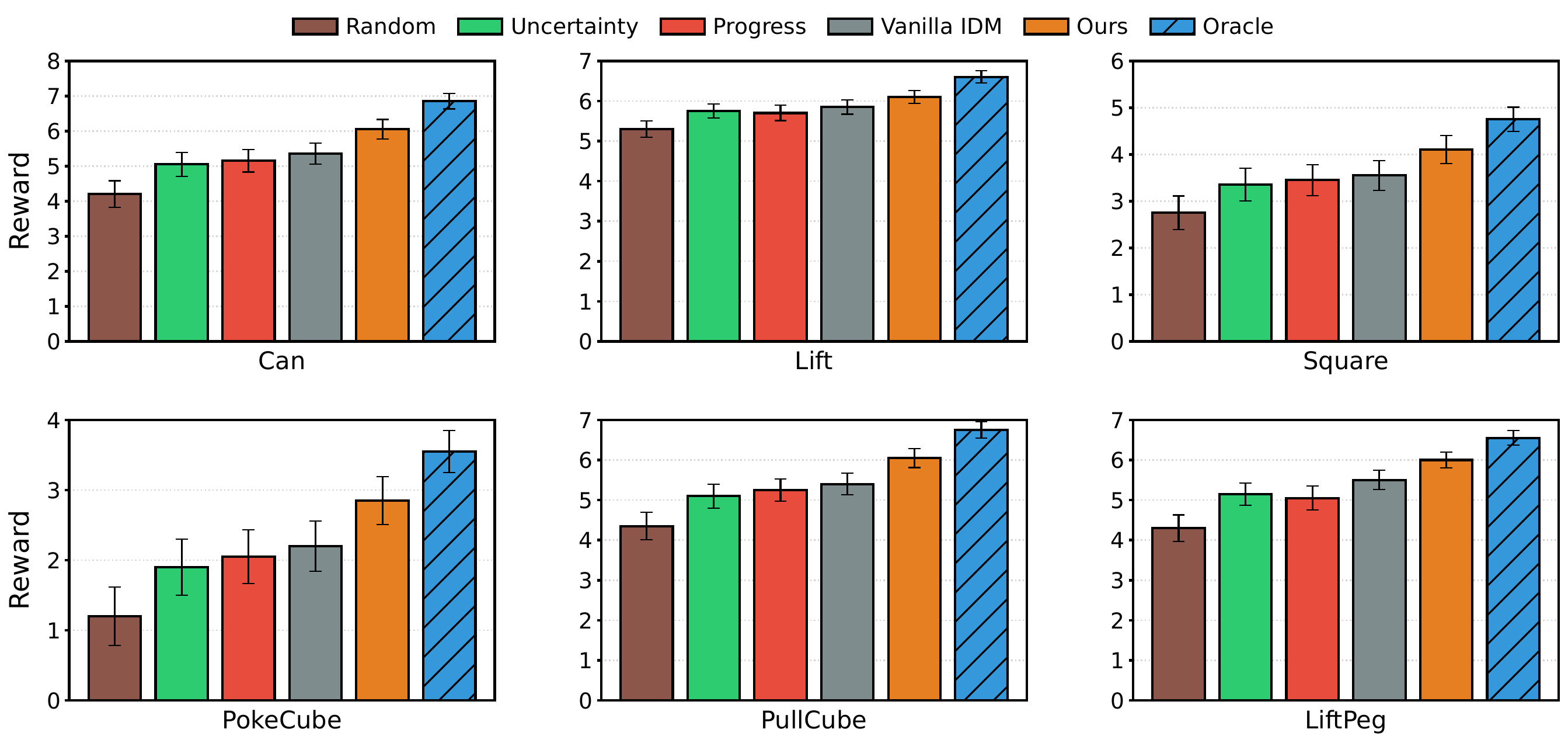}
    \caption{\textbf{Downstream policy performance on RoboMimic and ManiSkill using learned world models.}  Error bars denote the standard error over 3 seeds.}
    \label{fig:rewards}
\end{figure}


\subsection{OOD Setups}
We construct two out-of-distribution variants  (A visualization on RoboMimic \texttt{Can} task: Figure~\ref{fig:ood_setting}). 
The first variant introduces visual appearance shifts by changing nuisance rendering factors, including the background and embodiment color, while keeping the underlying task dynamics unchanged. 
The second variant introduces object and interaction shifts by adding multiple objects and collecting demonstrations from diffusion policy checkpoints with different training progress, resulting in a mixture of expert-like, medium-quality, and suboptimal behaviors. 
For each OOD variant, we initialize from the world model trained on the original \texttt{Can} environment and adapt it using only 200 target-domain trajectories. 
We evaluate both 32-step next-observation prediction error on held-out OOD trajectories and downstream reward after imagination-based policy refinement under the same SAILOR-style protocol used in the in-distribution robotic experiments.
\begin{figure}
    \centering
    \includegraphics[width=\linewidth]{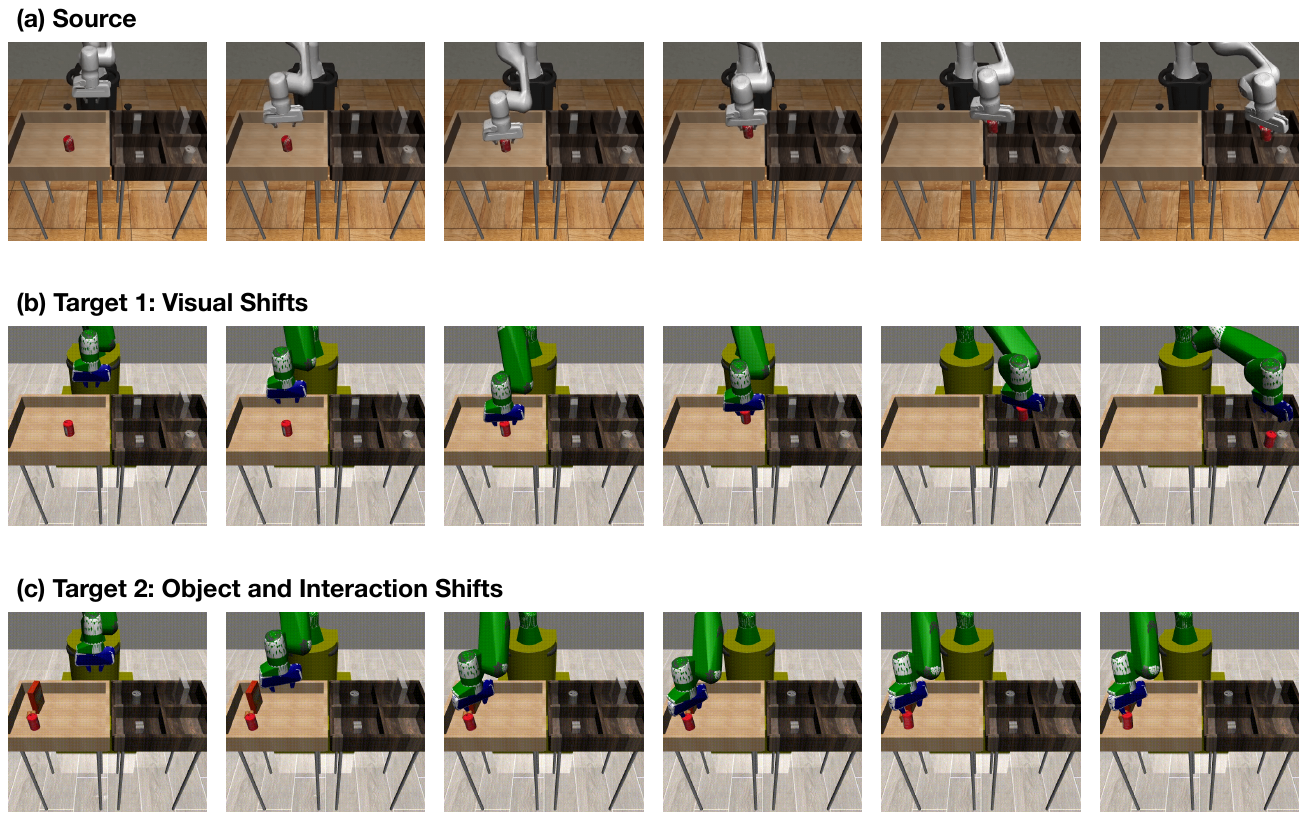}
\caption{
\textbf{Visualization of the OOD adaptation settings.}
Starting from the original RoboMimic \texttt{Can} environment, we construct two categories of target-domain shifts.
The first introduces visual shifts, including modified backgrounds and embodiment colors, while preserving the same task dynamics.
The second introduces more challenging object and interaction shifts, including scenes with multiple objects and demonstrations collected from policies with different levels of optimality.
These settings test whether the learned world model can adapt to both visual nuisance changes and harder dynamics-relevant distribution shifts with limited target-domain data.
}
\label{fig:ood_setting}
\end{figure}

\fi

\subsubsection{Effectiveness for World-Model Learning}
\label{sec:id_wm_policy}

To address \textbf{\textit{RQ3}}, we first evaluate whether WAV improves world-model learning under different data budgets in simulated robotic manipulation tasks. 


\textbf{Setup.} 
We warm-start each world model using \{200, 400, 600, 800, 1000\} trajectories (each containing approximately 200 episodic samples) and then fine-tune it for another 200 epochs with the self-improving loop. 
We report the average observation MSE over 32 predicted frames after 2 exploration rounds, averaged over 3 seeds. 


\textbf{Results.} 
Figure~\ref{fig:wm_pred_robots} shows the predictions of the world-model across different data budgets. Results under the 200-sample budget, together with standard errors, are reported in Appendix Table~\ref{tab:wm_error_200}.
WAV consistently outperforms all baselines, with especially large gains in the low-data regime. 
Sparse inverse dynamics models also consistently outperform dense variants on object manipulation tasks, supporting our hypothesis that sparsity improves inverse verification under limited data. 
The full downstream policy evaluation in Appendix~\ref{sec:full_results_policy} and Figure~\ref{fig:rewards} follow the same trend: world models improved by WAV support stronger imagination-based policy refinement than baseline world models, suggesting that the prediction gains correspond to more useful latent dynamics for control.

\subsubsection{OOD Adaptation and Downstream Policy Learning}
\label{sec:ood_generalization}

To address \textbf{\textit{RQ4}} and \textbf{\textit{RQ5}}, we evaluate whether WAV improves adaptation to out-of-distribution robotic manipulation settings and whether it supports better downstream policy refinement. 

\textbf{Setup.}
We use RoboMimic \texttt{Can} as the base environment and construct two OOD variants (visualized in Appendix Figure~\ref{fig:ood_setting}). 
The first introduces \textit{visual shifts}, changing nuisance factors such as background and embodiment color while keeping the task dynamics unchanged. 
The second introduces \textit{object and interaction shifts}, increasing task complexity with multiple objects and demonstrations of mixed optimality. 
These demonstrations are collected from diffusion policy checkpoints trained for different numbers of steps, producing expert-like, medium-quality, and suboptimal behaviors.

For each OOD variant, we start from the world model trained on the original RoboMimic \texttt{Can} data and adapt it using only 200 target-domain trajectories. 
We evaluate both world-model quality, measured by MSE on held-out OOD trajectories (with $256 \times 256$ resolutions), and downstream policy performance, measured by reward after imagination-based policy refinement under the SAILOR-based protocol~\citep{jain2025smoothseaskilledsailor}.
\begin{figure}[t]
    \centering
    \includegraphics[width=1.0\linewidth]{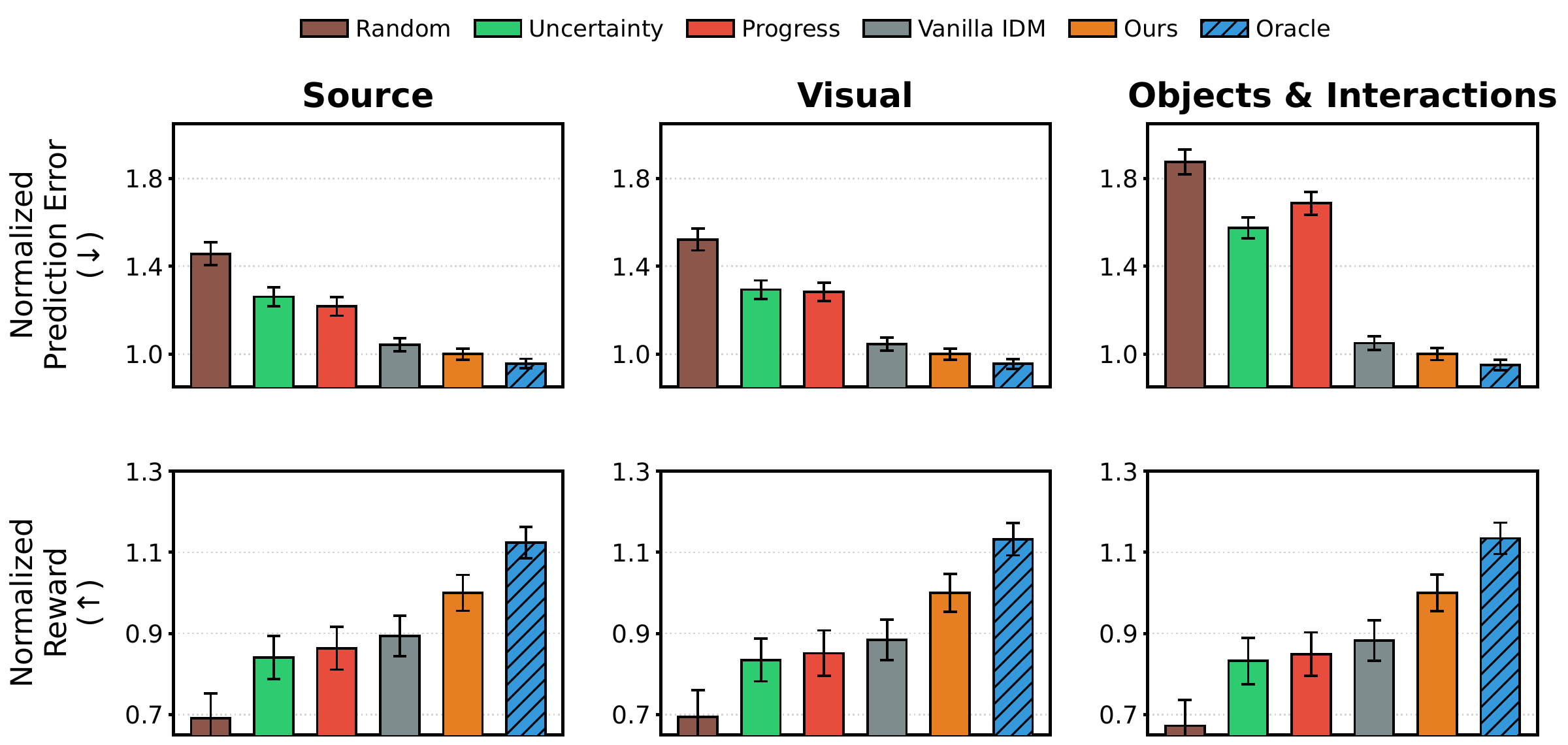}
    \caption{
    \textbf{OOD adaptation results on RoboMimic.}
    We evaluate world-model adaptation (normalized prediction error and downstream reward) under visual shifts and objects/interaction shifts.
     Error bars denote the standard error over 3 seeds.}
    \label{fig:ood_results}
\end{figure}


\textbf{Results.}
Figure~\ref{fig:ood_results} shows that WAV achieves stronger OOD adaptation than the baselines under both types of shift. 
Under visual shifts, where the dynamics are unchanged, but nuisance appearance factors vary, WAV better preserves action-conditioned dynamics after adaptation and achieves lower prediction error with the same 200 target trajectories. 
The corresponding reward results show that this improvement is not only a prediction-level gain: policies refined with WAV-adapted world models also achieve stronger downstream performance. Full in-distribution reward results across all environments are provided in Appendix~\ref{sec:full_results_policy} and Figure~\ref{fig:rewards}.

The gains become larger under object and interaction shifts, where multiple objects and mixed-optimality demonstrations introduce diverse contacts, ambiguous interactions, and under-explored transition regions. 
In this setting, heuristic acquisition strategies may either overemphasize visually novel but less informative transitions or miss rare interaction-rich cases that are critical for learning accurate dynamics. 
By focusing adaptation on transitions that are difficult for the current world model but still action-reachable, WAV provides a more data-efficient mechanism for adapting world models to under-explored target regions. 
Overall, the combined improvements in MSE and reward suggest that WAV learns OOD dynamics that are not only more predictive, but also more useful for downstream imagination-based policy refinement, achieving an approximately $22\%$ improvement on novel environments with new objects and interactions.

\section{Related Work}

{\bf Exploration for World Models.}
Exploration for world models asks which interactions most improve a learned dynamics model. Classical coverage-based objectives use counts or density surrogates~\citep{bellemare2016unifyingcountbasedexplorationintrinsic,ostrovski2017countbasedexplorationneuraldensity,burda2018explorationrandomnetworkdistillation,gxchen2025efficientexplorationdiscriminativeworld}, while model-based variants estimate value through uncertainty or disagreement~\citep{houthooft2017vimevariationalinformationmaximizing,shyam2019modelbasedactiveexploration,sekar2020planning}, learning progress~\citep{graves2017automatedcurriculumlearningneural,achiam2017surprisebasedintrinsicmotivationdeep,kim2020activeworldmodellearning}, prediction error and curiosity~\citep{pathak2017curiositydrivenexplorationselfsupervisedprediction,haber2018learningplayintrinsicallymotivatedselfaware,stadie2015incentivizingexplorationreinforcementlearning,du2021curious}, or goal discovery with learned world models~\citep{mendonca2021discovering,hu2023planning,zheng2024can,liu2024single}. These signals are useful but often come from the current action-conditioned world model, making them unreliable in under-explored regimes where verification is most needed. Related work also exploits unlabeled prior data for optimistic exploration~\citep{Rashid2020Optimistic, li2023accelerating}, unsupervised skill discovery~\citep{wilcoxson2024leveraging, wang2024skild}, and mutual-information objectives~\citep{eysenbach2018diversity, zheng2024can}, but typically aims to learn exploration behaviors. We instead verify informative transitions through plausible future states and sparse inverse-dynamics reachability to directly enhance action-conditioned world models.

{\bf World Action Models.}
World action models (WAMs) have recently made substantial progress in policy learning by jointly leveraging action-free internet video and action-labeled robot data. One line jointly predicts future video and actions within a unified model, allowing large-scale action-free data to improve representations without changing the downstream policy interface~\citep{cheang2024gr,rigter2024avid,huang2025vid2world,zhu2025unified,li2025unified,bi2025motus,wu2024ivideogpt}. A second line performs visual planning or foresight generation and conditions action prediction on future frames, shifting more of the planning burden into semantic image or video space while keeping low-level control at the action level~\citep{du2023learning,black2023zero,chen2025large,jang2025dreamgen,vuong2026world2act,hu2025video,lv2025f1,cai2026internvla,ye2026world,ye2026gigaworld, li2026causal, yuan2026fast}. Our method is formally closer to the second family: like these approaches, it combines future-state generation with inverse action prediction. The difference is that we use this structure not as a policy model, but to expose an accuracy advantage of inverse verification over world models and to repurpose a WAM-like pipeline as a verifier for action-conditioned world models.

\section{Conclusion}
\label{sec:conclusion}

In this work, we identified an essential asymmetry between forward and inverse dynamics: inferring which action caused a plausible transition can be substantially easier than predicting its full outcome. Building on this insight, we introduced \methodname{}, a self-improving framework that exploits cycle consistency among a diverse subgoal generator, a sparse inverse model, and a forward world model to gather informative interactions. Across MiniGrid, RoboMimic, and ManiSkill, our method enables $2\times$ greater sample efficiency in exploration and improves downstream policy performance by 22\%.

\section*{Acknowledgement}

We thank the members of the IRIS Lab for valuable feedback and discussions. We also thank Xiangcheng Zhang for help with the simulation setup. This work was supported in part by the Robotics and AI Institute, DARPA, ONR, CIFAR, SNSF, Schmidt Science, NSF, NIH, and the AI Institute for Societal Decision Making.

\bibliography{references}


\clearpage
\newpage
\appendix
\section*{Appendix Contents}
\label{appendix}
\addtocontents{toc}{\protect\setcounter{tocdepth}{1}}
\vspace{0em}
\makeatletter
\@starttoc{toc}
\makeatother
\vspace{2em}

\ifnips
\section{Additional Experiments}
\label{app:experiment}
\ifnips

\else
\fi

\else
\fi

\section{Additional Related Work}
\label{app:related}
\paragraph{World Models for Robotics.}

Model-based reinforcement learning (MBRL) learns predictive environment dynamics for planning and policy improvement. Early approaches focused on probabilistic, data-efficient control~\citep{sutton90integratedarchitectures,deisenroth2011pilco}, while modern deep MBRL combines expressive dynamics models with planning and imagined rollouts in off-policy learning~\citep{chua2018deep,janner2019trust}. By reasoning over predicted futures, these methods can be highly sample-efficient and effective for control. However, they are often tied to specific tasks or policy distributions, which limits their robustness under distribution shifts. To mitigate this, prior work has explored model ensembles~\citep{buckman2018sample, janner2019trust}, conservative optimization \citep{yu2021combo, kolev2024efficient}, and online fine-tuning initialized from offline priors~\citep{rafailov2023moto, feng2023finetuning}. Nevertheless, these approaches still remain limited in their scalability to high-dimensional perception, diverse interaction regimes, and broad generalization.

More recently, general-purpose world models that learn predictive representations from large and diverse sequential data have been developed. One line of work focuses on \textit{latent world models}, which learn compact action-conditioned dynamics from high-dimensional observations and perform prediction, imagination, and control in a learned latent space~\citep{ha2018recurrent,hafner2019learning, Kaiser2020Model, hafner2019dream,hafner2023mastering, zhou2025dinowmworldmodelspretrained, garrido2024learning, maes2026leworldmodel}. A second line is more \textit{planning- and control-centric}, with objectives that are directly for decision making and policy improvement~\citep{schrittwieser2020mastering,hansen2022temporal,hansen2023td}. A third line studies \textit{pixel-based} world models trained on large-scale robotics or even internet video data~\citep{wu2024ivideogpt,rigter2024avid, chen2025large}, either by combining video generation with inverse dynamics models~\citep{chi2025wow} or by directly learning action-conditioned dynamics~\citep{guo2025ctrlworldcontrollablegenerativeworld,hafner2025training,gao2026dreamdojo}, with the goal of producing visually realistic yet controllable futures.
A key advantage of these models is that they can leverage internet-scale data and increasingly scalable model sizes to acquire broad predictive priors. Despite their strong generative capacity, these models still face important challenges in physical consistency and action following~\citep{shang2026worldarena, mei2026video}, especially when deployed for robotics control. Our work addresses this through targeted exploration, which actively collects informative interactions with verification to improve the robustness of action-conditioned world models.

\paragraph{Inverse Dynamics on Videos.}
A common strategy for leveraging action-free observations is to infer actions from state transitions. Existing work uses inverse dynamics in different roles in world model and policy learning. As \emph{policies}, IDMs convert predicted or planned future observations into executable actions, as in visual-planning and foresight-based systems~\citep{du2023learning,black2023zero,chen2025igor,tian2025predictive,hu2025video,lv2025f1,luo2025self,cai2026internvla,ye2026world}. As \emph{regularizers}, inverse objectives have long been used to shape self-supervised representations and dynamics features~\citep{agrawal2016learning,pathak2017curiositydrivenexplorationselfsupervisedprediction}. As \emph{labelers}, inverse models can impute missing actions from state-only or video demonstrations~\citep{torabi2018behavioral,yang2019imitation,baker2022video,jang2025dreamgen}. As \emph{verifiers}, they can test whether a transition is action-consistent; conceptually, our approach is closest to this line and to RLIR~\citep{ye2025reinforcement}. However, our verifier differs in two key ways: it uses a reverse cycle anchored on plausible future states, and it checks reachability with a sparse action-relevant inverse model rather than dense full-state generation.

\paragraph{Self-improvement via verification.}
A parallel line of work studies self-improvement loops driven by internally generated feedback, especially in language models and self-play agents.
In alignment, solver-verifier pipelines optimize policies against learned feedback signals~\citep{ouyang2022training,bai2022constitutional,rafailov2023direct}; explicit verifiers have also been used to filter and rerank candidate solutions in domains like math reasoning \citep{cobbe2021training,zhang2024generative}.
Complementary approaches bootstrap supervision by generating candidate solutions and selectively adding verified/high-quality outputs back into training~\citep{zelikman2022star,wang2023self}, as well as test-time self-critique and refinement loops \citep{madaan2023self,shinn2023reflexion}.
Finally, self-play provides a principled mechanism to create increasingly challenging data and feedback~\citep{silver2017mastering,huang2025r,zhang2025path,liu2025spiral,liu2025spice,wang2025improving}.
Our work shares the idea of using a verifiable internal signal to turn novel experience into training data.
While self-improvement is well-studied for language models, \emph{few works explore it for world models}, where verification is harder due to continuous dynamics and absent symbolic ground truth.
\methodacro addresses this gap with a video-prior subgoal generator and sparse inverse-dynamics cycle consistency as a reachability verifier.

\section{Additional Theoretical Analysis}
\label{subsec:identifiability}

In this section, we analyze the exact conditions under which the sparse inverse verifier can generalize beyond the limited training distribution of labeled transitions.
To analyze this, we model the observed state $s^t$ as arising from a latent vector $\zz^{t}=(\zz_1^{t},\dots,\zz_k^{t})$.
The learned mask $M$ in \eqref{eq:sparse_inverse_def} selects an action-relevant block $\cS$ of this latent space; intuitively, $\cS$ captures agent-centric variables (\eg, proprioception or end-effector motion) and is largely insulated from the rest of the scene.
The sparse inverse model $h_\psi$ from \cref{sec:method} thus operates on $(\zz_{\cS}^{t},\zz_{\cS}^{t+1})$; we write the verifier as
$\hat{\aa}^{t}=h_{\psi}(\hat{\zz}^{t}_{\cS},\hat{\zz}^{t+1}_{\cS})$, where $\hat{\zz}^{t}$ denotes the encoder's latent estimate of $\zz^{t}$.

Let $P_{\text{seed}}$ denote the distribution induced by $\mathcal{D}_{\mathrm{act}}$; we call a state--action pair \emph{out-of-support} (OOS) when $(\zz^{t},\aa^{t})\notin\supp(P_{\text{seed}})$.
The key structural condition is the presence of a \emph{generation--verification gap}: the full pair $(\zz^{t}, \aa^{t})$ may be OOS, while the restricted pair $(\zz^{t}_{\cS}, \aa^{t})$ remains on support. This captures the regime in which scene-level consequences are novel, but the agent-side motion that encodes the action is still familiar.

\begin{proposition}[Informal]
\label{prop:oos_id}
Assume there exists an identifiable verification subset $\cS$ such that: (i) $\zz_{\cS}^{t+1}$ depends only on $(\zz_{\cS}^{t},\aa^{t})$ and not on the rest of the scene; (ii) $(\zz_{\cS}^{t},\aa^{t})$ stays on-support even when $(\zz^{t},\aa^{t})$ is OOS; and (iii) the action is identifiable from the subset transition $(\zz_{\cS}^{t},\zz_{\cS}^{t+1})$.
Then an inverse model trained on the seed data can recover the correct action from $(\hat{\zz}_{\cS}^{t},\hat{\zz}_{\cS}^{t+1})$ on such compositional OOS transitions.
Consequently, the forward--inverse mismatch used by \methodacro localizes forward-model error rather than action-label ambiguity.
\end{proposition}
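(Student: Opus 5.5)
The argument factors into three steps: recover the block $\zz_{\cS}$ from observations, show that the sparse inverse model is correct on the on-support slice, and transfer that correctness to compositional OOS transitions using assumptions (i)--(ii). The final step, localizing the forward--inverse mismatch, then follows by a triangle-type argument.

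\textbf{Step 1: block identifiability.} I would first make precise what ``identifiable verification subset'' means. The natural formalization is block-identifiability of $\cS$: there is an invertible map $\pi_{\cS}$ with $\hat{\zz}_{\cS}^{t}=\pi_{\cS}(\zz_{\cS}^{t})$, valid on all states, including those whose full latent $\zz^t$ is OOS. To obtain this, I would invoke a standard block-identifiability argument for a sparse mechanism. Assumption (i) says $\zz_{\cS}^{t+1}$ has parents only in $(\zz_{\cS}^{t},\aa^t)$, and the action influences only $\cS$ through the mask $M$. A learned encoder whose inverse-dynamics head is restricted to $M$ and which matches the observational distribution must therefore keep $\cS$ disentangled from the remaining blocks. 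In the informal version I would take this as the content of ``identifiable'' and state it explicitly as a hypothesis rather than derive it. The key property needed is that $\pi_{\cS}$ is a fixed bijection that does not depend on the rest of the scene; this is what allows it to extend off-support in the other coordinates.

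\textbf{Step 2: correctness on support.} Assumption (iii) says the map $(\zz_{\cS}^{t},\zz_{\cS}^{t+1})\mapsto \aa^t$ is a well-defined function $h^\star$. In the population (infinite-seed) limit, the inverse model trained on $P_{\text{seed}}$ minimizes the risk on the support of $(\hat\zz_{\cS}^{t},\hat\zz_{\cS}^{t+1})$. It therefore equals $h^\star\circ(\pi_{\cS}^{-1}\times\pi_{\cS}^{-1})$ almost everywhere on that support, and everywhere if continuity is assumed.

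\textbf{Step 3: transfer to OOS transitions.} This is where (i) and (ii) enter. Take an OOS pair $(\zz^t,\aa^t)$ whose restriction $(\zz_{\cS}^{t},\aa^{t})$ is on-support. By (i), $\zz_{\cS}^{t+1}$ is generated by the same $\cS$-local mechanism as in the seed data, so the triple $(\zz_{\cS}^t,\zz_{\cS}^{t+1},\aa^t)$ lies in the support of the seed joint distribution restricted to $\cS$. Step 1 gives the latent estimates exactly. Step 2 then shows that $h_\psi$ outputs $\aa^t$, or the posterior mean under ambiguity, which reduces to the exact action under (iii).

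\textbf{Consequence: localization.} Since the inferred action $\hat\aa$ equals the true action, the discrepancy $\ell(f_\theta(s^t,\hat\aa),\tilde s^{t+1})$ equals $\ell(f_\theta(s^t,\aa),\tilde s^{t+1})$. For a subgoal actually reached under $\aa$, this is exactly the forward error $\varepsilon$. Any mismatch is thus attributable to $f_\theta$ and not to an action-label error, which also yields the ranking property \eqref{eq:ranking_goal}.

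\textbf{Main obstacle.} I expect Step 1 to be the hardest part: guaranteeing that $\hat{\zz}_{\cS}$ is correct on states whose other blocks were never seen. Encoders can entangle novel scene content into $\hat\zz_{\cS}$. I would first try to treat it by assuming a block-wise encoder structure, either a mixing function that is injective and additive or componentwise across blocks, so that $\hat{\zz}_{\cS}$ depends only on $\zz_{\cS}$. Failing that, I would state identifiability on the union of supports as an explicit hypothesis.
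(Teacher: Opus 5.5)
Your argument is correct at the same informal level as the paper, and its skeleton matches the proof of Theorem~\ref{thm:master}. Both align the learned latents with the true blocks, use the generation--verification gap (with (i) making $g_{\cS}$ well defined) to place the test subset transition inside the seed support, and conclude that the inverse model returns the true action.

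The real difference is the key lemma for action recovery. The paper invokes Condition~\ref{cond:la_ident}, the latent-action identifiability theorem of \citet{lachapelle2025identifiability}. That theorem requires continuity, injectivity, connected conditional supports and support overlap. It only yields $v(\aa)$ for an unknown injective relabeling $v$, which is afterwards fixed using the labeled seed data. You instead exploit the labels directly: injectivity alone makes $\aa^t$ a deterministic function $h^\star$ of $(\zz_{\cS}^{t},\zz_{\cS}^{t+1})$, so the population risk minimizer is $h^\star\circ(\pi_{\cS}^{-1}\times\pi_{\cS}^{-1})$ on the support. Your route is more elementary and needs fewer assumptions: only injectivity, plus continuity to pass from almost-everywhere to pointwise correctness. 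The paper's route is what a CLAM-style latent-action inverse model trained largely without labels would need, since there labels merely resolve a permutation.

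Similarly, for the first step the paper gets complete disentanglement from the mechanism-sparsity result (Condition~\ref{cond:pd_ident}). That also recovers the graph, which makes $\cS=\cS_{\mathrm{src}}\cap\cS_{\mathrm{act}}$ accessible from observations. You presuppose $\cS$ (via $M$) and take block identifiability as a hypothesis.

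Your version is more careful in two places. First, you isolate the need for the encoder's identification to extend to states whose non-$\cS$ blocks are out of support. The paper's proof takes this for granted and simply evaluates $h_\psi$ on the true $\zz_{\cS}$. Second, you actually derive the ``Consequently'' clause, which the paper only asserts in its interpretation paragraph; its formal theorem stops at correct labeling and support expansion.

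Two cautions. Your equality $\hat\varepsilon=\varepsilon$, and hence the ranking property, relies on deterministic dynamics and on the subgoal being exactly reachable from $s^t$. And in Step~1 you should not say the action influences only $\cS$: action-influenced blocks outside $\cS_{\mathrm{src}}$ (e.g.\ manipulated objects) are exactly where the out-of-support novelty lives.
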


{\bf Interpretation.}
Proposition~\ref{prop:oos_id} guarantees that the sparse inverse model $h_\psi$ produces correct pseudo-labels whenever the agent's own motion pattern (\eg, joint-angle trajectories) was seen during training, even if the full scene transition is novel.
This is a strictly weaker requirement than asking the \emph{full} transition to be on-support, which is what a dense forward model or a full-observation inverse model would need.
The direct consequence for the self-improving cycle in \cref{subsec:selfimprove} is that the discrepancy $\ell(\tilde{s}^{t+1}, \hat{s}^{t+1})$ between the subgoal and the forward rollout reflects genuine world-model error rather than action-label noise, so each exploration round adds trustworthy data that expands the world model's effective coverage.
Appendix~\ref{app:identifiability} formalizes this claim and shows how the verification subset $\cS$ can be identified from observations.

\section{Additional Discussions}
\label{app:discussion}
Although our primary focus is verification-guided exploration for acquiring new interactions, the proposed \methodacro may also be useful in other settings that benefit from robust error estimation, such as test-time scaling~\citep{nakamoto2024steering,liu2025bidirectional,kwok2025robomonkey} and offline data curation~\cite{hejna2025robot,chen2025curating,agia2025cupid}. Nevertheless, the current instantiation of our method requires three inference passes, making it computationally more expensive than prior exploration methods. Improving its efficiency through shared intermediate representations~\citep{zhu2025unified,li2025unified} or adaptive computation mechanisms~\citep{yang2026dynamic,zhang2025inference,lin2026plan} is an important direction for enabling more affordable real-time deployment.

More broadly, our results suggest that the forward--inverse asymmetry may be especially pronounced in high-dimensional, uncertain environments. However, fully self-improving world models in such complex settings remain elusive. Unlike language models, which can already improve from purely synthetic data on some reasoning tasks, our method still relies critically on additional environment feedback to correct action-conditioned prediction errors. Reducing this reliance will likely require substantially stronger verification mechanisms~\citep{lifshitz2025multi,kwok2026scaling}, likely scaffolding on more capable pretrained models. Extending our \methodname{} to incorporate richer generative priors~\citep{chen2025large,gao2026dreamdojo} and more expressive inverse models~\citep{tian2025predictive,ye2026world} in longer-horizon embodied tasks~\citep{liu2023liberobenchmarkingknowledgetransfer,nasiriany2024robocasa,bu2025agibot,dai2026robomme} can be promising directions for future work.

\section{Additional Implementation Details}

\subsection{Compute Resource}
For the MiniGrid experiments, we utilized $1\times$ NVIDIA RTX 4090 GPU, with each full experimental run requiring approximately $2$ to $3$ GPU hours. This duration encompasses world model training, verifier training and active learning iterations. For the robotic manipulation experiments, we used either $6\times$ NVIDIA L40S GPUs or $3\times$ NVIDIA H100 GPUs across all experiments. On average, WAV required approximately 40 GPU hours per robotic environment, while the baselines required approximately 36 GPU hours under comparable settings. These include world-model training, verifier training, and downstream imagination-based policy refinement.

\subsection{MiniGrid Setting}
\label{sec:minigrid_setting}

We conduct experiments in the MiniGrid simulation environment, using \texttt{EmptyEnv} with three object types: key, ball, and box, each of which can be either red or blue.
The agent has seven discrete actions: \textit{turn left, turn right, move forward, pick up, drop, toggle, and swap.} The behavior of the toggle action is object-dependent: for keys and balls, it switches the object’s color; for boxes, it acts as an exchange mechanism, swapping the item currently held by the agent with the item inside the box (or placing the held item inside if the box is empty).
The swap action is not part of the original EmptyEnv. We define it as exchanging the object in front of the agent with the object it is carrying.
Based on this setup, we designed three tasks in EmptyEnv, the details of which can be found in ~\cref{sec:task_definitions}.

\subsubsection{Task Definitions}
\label{sec:task_definitions}
To evaluate the agent's ability to handle long-horizon dependencies and compositional logic, we design three complex tasks in the MiniGrid environment. Each task requires the agent to manipulate objects (Key, Ball, Box) based on their attributes (Red, Blue).

\begin{itemize} \item \textbf{Task 1: Key Delivery.} The agent must: (1) locate a key, (2) change its color to match the target box, (3) place the key inside the box, (4) swap the box with a ball, (5) adjust the ball's color to match the box, and (6) reach the goal. 

\item \textbf{Task 2: Ball Delivery.} 
This is a structural mirror of Task 1 but swaps the roles of the key and the ball. The agent must place the ball inside the box before manipulating the key and reaching the goal.

\item \textbf{Task 3: Object Matching.} 
The agent must: (1) identify the reference color of the box, (2) locate the key and ball, (3) synchronize the key's color with the box, (4) synchronize the ball's color with the box, (5) place both the key and the ball around the box, and (6) reach the goal.
\end{itemize}

\subsubsection{Dataset Composition.}

\textbf{Random Play Dataset.}
We construct random play datasets based on the \texttt{EmptyEnv}, where objects can be freely placed. 
To study state complexity, we vary the number of objects $\{6, 8, 10, 12, 14\}$ and collect trajectories using random policies, resulting in environments with increasingly complex object configurations. 
In this setting, only the environment with 6 objects is used to construct both the training and test sets, while environments with higher object counts are used exclusively for testing, enabling controlled evaluation of generalization to more complex scenes.

To study environmental stochasticity, we vary the number of noisy floor tiles $\{0,1,2,3,4\}$, whose colors change randomly at every step. For each noise level, we collect trajectories with random policies and construct both training and test sets, allowing us to evaluate robustness under different levels of environmental noise.

\textbf{Exploration Pool.}
We collect a total of 56{,}273 transitions across the three tasks defined in~\cref{sec:task_definitions}, covering diverse state–action–next-state tuples. 
More than half of the collected data (28{,}000 transitions) is used as an unlabeled pre-training set to train the video model without action annotations. 
The remaining 28{,}273 transitions form the exploration pool, from which different acquisition strategies iteratively select informative samples. 
We additionally construct an action-balanced test set of 10{,}368 transitions for evaluation.

\textbf{Compositional OOD Generalization.}
To evaluate compositional out-of-distribution generalization, we partition action–object–color combinations as summarized in Table~\ref{tab:action_object_oos}. 
During training, models are exposed only to a restricted subset of combinations (e.g., \emph{pick up blue keys}), while evaluation is conducted on held-out compositions involving unseen combinations (e.g., \emph{pick up blue balls}). This setup tests whether the model can generalize compositionally beyond observed training distributions.

\begin{table}[t]
\centering
\caption{Statistics of the collected dataset for exploration. The data is split into an unlabeled pre-training set for learning the video model, an exploration pool for sample acquisition, and an action-balanced test set for evaluation.}
\label{tab:dataset_stats}
\begin{tabular}{lr}
\hline
\textbf{Category} & \textbf{Count (Transitions)} \\ \hline
Total Collected & 66,641 \\
Unlabeled Pre-training Set & 28,000 \\
Exploration Pool & 28,273 \\
Test Set (Action-balanced) & 10,368 \\ \hline
\end{tabular}
\end{table}
\providecommand{\seen}{\textcolor{green!60!black}{$\checkmark$}}
\providecommand{\oos}{\textcolor{orange!85!black}{$\star$}}
\providecommand{\absent}{\textcolor{gray!70}{$\times$}}

\begin{table}[t]
\centering
\small
\setlength{\tabcolsep}{4pt}
\caption{Action--object composition coverage in the training and OOS test sets.
\seen\ denotes compositions seen during training,
\oos\ denotes OOS-only compositions evaluated at test time,
and \absent\ denotes combinations absent from both sets.}
\begin{tabular}{lcccc}
\toprule
Action $\backslash$ Object & red key & blue key & red ball & blue ball \\
\midrule
Pick up              & \absent & \seen & \absent & \oos \\
Drop                 & \oos    & \absent & \seen & \absent \\
Toggle               & \oos    & \seen & \seen & \oos \\
Swap (box for \dots) & \oos    & \absent & \seen & \absent \\
\bottomrule
\end{tabular}
\label{tab:action_object_oos}
\end{table}

\subsubsection{Evaluation Metrics.}
\label{sec:minigrid_metrics}

\paragraph{Prediction Accuracy.}
We first report Dynamics Accuracy, which measures prediction accuracy only over elements that undergo temporal changes, including both visual grid cells and internal agent attributes (e.g., carried status), while masking out invariant background regions. By focusing on these dynamic components, Dynamics Accuracy mitigates metric inflation caused by static background dominance and better reflects the model’s ability to capture action-driven dynamics.

In exploration experiments, we further evaluate all methods using the world model’s next-state prediction loss on the held-out test set. We utilize prediction loss in this context because it provides a more sensitive signal of training stability and convergence behavior during exploration, whereas Dynamics Accuracy is better suited for interpreting final predictive performance.

\paragraph{Ranking Quality.}
To assess the quality of data selection, we compute the rank correlation between method-assigned scores and Oracle scores using Spearman's rank correlation coefficient ($\rho$)~\cite{spearman1961proof} and Kendall's rank correlation coefficient ($\tau$)~\cite{kendall1938new}.
Given a set of samples with scores $\{s_i\}$ and corresponding Oracle scores $\{o_i\}$, the Spearman correlation is defined as
\begin{equation}
\rho = 1 - \frac{6 \sum_i (r_i - q_i)^2}{n(n^2 - 1)},
\end{equation}
where $r_i$ and $q_i$ denote the ranks of $s_i$ and $o_i$, respectively. 

Kendall’s $\tau$ measures the consistency of pairwise orderings:
\begin{equation}
\tau = \frac{N_c - N_d}{\frac{1}{2} n(n-1)},
\end{equation}
where $N_c$ and $N_d$ are the numbers of concordant and discordant pairs. Higher values indicate stronger agreement with the Oracle ranking.

\paragraph{Action Following Score.}
\label{sec:action_following}
In addition to prediction accuracy and ranking quality, we evaluate whether the learned world model can capture action-dependent dynamics. To this end, we introduce the \textit{Action Following Score} (AFS), which measures how well the model preserves distinctions between different actions in its predicted future states.

Given an initial state $s_0$ sampled from the test set and a set of candidate actions $\{a_i\}_{i=1}^N$, we obtain predicted next states $\hat{s}_i = f_\theta(s_0, a_i)$ from the learned world model, and corresponding ground-truth next states $s_i$ from the simulator. We define a difference function $\mathrm{Diff}(\cdot, \cdot)$ that counts the number of grid cells with different values between two states. The Action Following Score is then defined as
\begin{equation}
\text{AFS}(s_0) = 
\frac{
\sum_{i < j} \mathrm{Diff}(\hat{s}_i, \hat{s}_j)
}{
\sum_{i < j} \mathrm{Diff}(s_i, s_j)
}.
\end{equation}
We report the final score by averaging $\text{AFS}(s_0)$ over initial states sampled from the test set.

Intuitively, the denominator measures the true diversity induced by different actions, while the numerator reflects how much of this diversity is preserved by the model. A higher AFS indicates that the model produces more distinguishable predictions across actions, suggesting stronger action-conditioned modeling.

\subsubsection{Models in MiniGrid}

\textbf{World Model.}
We employ a physics-aligned architecture that preserves spatial structure via coordinate-aware convolutions.
The model incorporates a \textbf{supervised vector-quantized bottleneck}~\cite{oord2018neuraldiscreterepresentationlearning}, which explicitly maps latent codes to discrete actions and conditions a residual dynamics engine through Feature-wise Linear Modulation (FiLM)~\cite{perez2017filmvisualreasoninggeneral}, promoting object persistence and physical consistency.

\textbf{Inverse Dynamics Models (IDM).} To verify the robustness of sparse IDM, we compare two IDMs on the OOD test set:
\begin{itemize}
    \item \textbf{Vanilla IDM:} Takes the entire observation frame and the agent's proprioceptive state as input.
    \item \textbf{Sparse IDM:} Built upon the vanilla IDM by applying a learnable feature mask to the input, which selectively filters out irrelevant information and yields a sparse representation. The mask is learned automatically during training, encouraging the model to focus on the most informative local features.
\end{itemize}

Architecturally, both variants share a convolutional encoder that extracts spatial features from observations, followed by a projection layer that integrates object-centric attributes. Action prediction is performed by jointly reasoning over embeddings of consecutive frames, augmented with explicit geometric cues including relative position and direction changes.

\subsubsection{Exploration Methods in MiniGrid}

The \textbf{Oracle} strategy used in \cref{sec:experiments} selects samples with the highest prediction loss, corresponding to the hardest transitions under the current world model.
To better understand the role of sample difficulty during exploration, we further introduce two oracle variants:

\begin{itemize}
    \item \textbf{Oracle-Easy:} selects samples with the lowest prediction loss, corresponding to the easiest transitions.
    \item \textbf{Oracle-Uniform:} partitions samples into disjoint prediction-loss intervals and selects high-loss samples within each interval, ensuring balanced coverage across different difficulty levels.
\end{itemize}

To analyze the behavior of different exploration strategies, we visualize the distribution of prediction errors—measured by the world model’s prediction loss—over the samples selected by each method (\cref{fig:al_distribution}).
This analysis reveals how different selection criteria bias the collected data toward specific difficulty regimes and provides insight into their impact on world model learning.

\textbf{Qualitative Results.}
\label{sec:Qualitative Results of MiniGrid}
\cref{fig:qual_part1} presents a qualitative comparison of world model predictions across interactive actions.
While most methods perform similarly on simple motion-dominated transitions, clear differences arise for interaction-centric actions such as \textit{Toggle} and \textit{Swap}.
In these cases, models trained with actively selected data more accurately capture interaction-induced state changes, whereas \textit{Random} exhibit a strong bias toward predicting frequent but uninformative movement actions (e.g., \textit{Turn}), highlighting the benefit of informative data selection under distribution shift.

\begin{figure}[t]
    \centering
    \includegraphics[width=0.4\linewidth]{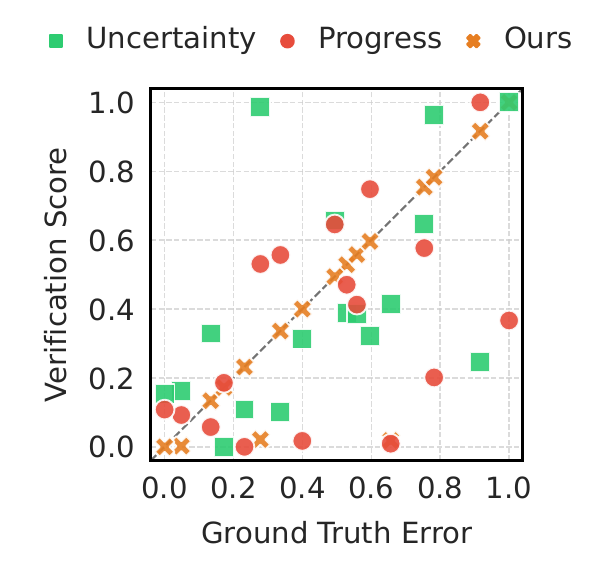}
     \caption{Verification score vs. ground-truth error. Our method exhibits a strong monotonic alignment with true error, whereas baselines show scattered distributions and frequent misranking.}
    \label{fig:scatter}
\end{figure}

\textbf{Verification Score vs. Ground Truth Error.}
As shown in Figure~\ref{fig:scatter}, our method exhibits a strong monotonic relationship between verification score and true error, with samples distributed closely along the diagonal. This indicates that the proposed verification mechanism provides a faithful estimate of sample difficulty, assigning higher scores to transitions that are indeed harder for the world model.

In contrast, baseline methods such as \textit{Uncertainty} and \textit{Progress} display significantly weaker alignment, with scattered distributions and frequent misranking of samples. In particular, they tend to assign high scores to samples with relatively low true error or fail to consistently identify high-error transitions.

These observations are consistent with the quantitative results reported in Figure~\ref{fig:minigrid_results_2} (Mid), where our method achieves the highest Spearman and Kendall correlations. Together, the results further validate that our verification scores accurately capture the underlying difficulty of transitions, leading to more effective data selection.

\begin{figure}
    \centering
    \includegraphics[width=0.8\linewidth]{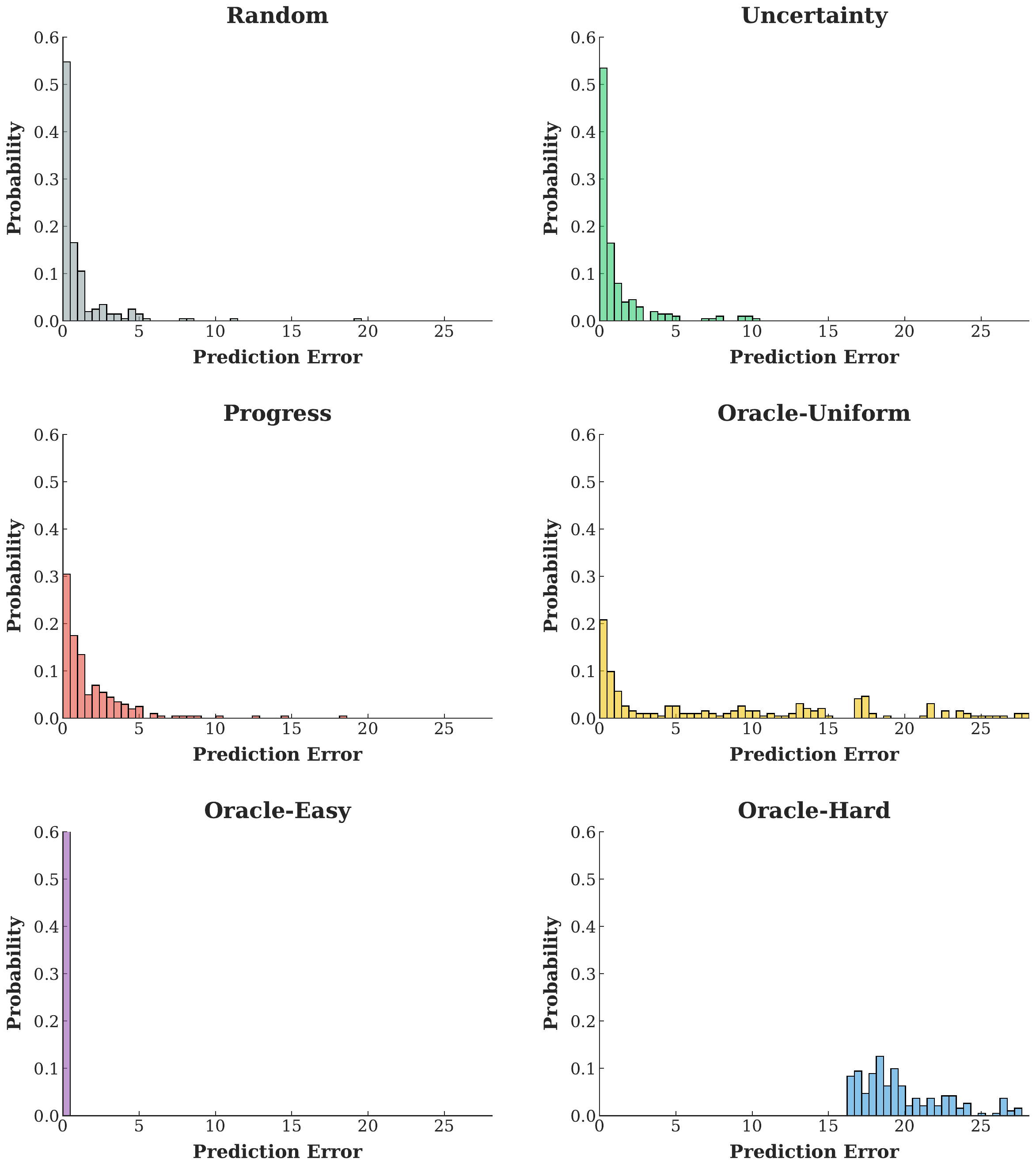}
    \caption{The distribution of world model's prediction error on the data selected by different exploration methods in MiniGrid.}
    \label{fig:al_distribution}
\end{figure}

\ifnips
\begin{figure}[t]
    \centering
    \includegraphics[width=\linewidth]{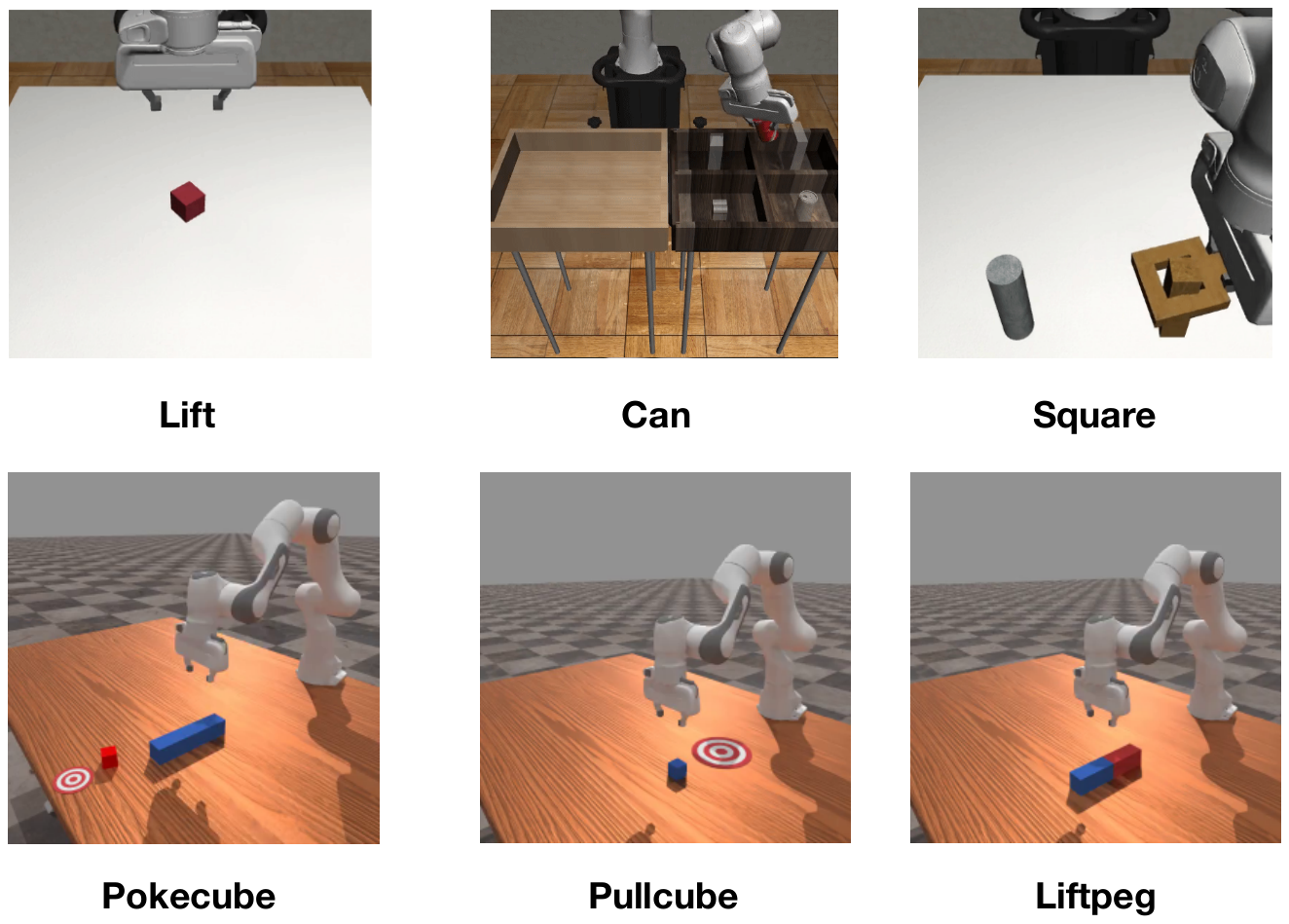}
    \caption{Visual description of all tasks used from RoboMimic (Row 1) and Maniskill (Row 2).}
    \label{fig:intro_env}
\end{figure}
\else
\begin{figure}
    \centering
    \includegraphics[width=\linewidth]{asserts/env.pdf}
    \caption{Visual description of all tasks used from RoboMimic (Row 1) and Maniskill (Row 2).}
    \label{fig:intro_env}
\end{figure}
\fi

\subsection{Robotic Domains Setting}
The experiments are conducted on simulated robotic manipulation tasks from Robomimic~\citep{zhu2020robosuite} (\texttt{Lift}, \texttt{Can}, \texttt{Square}) and ManiSkill~\citep{mu2maniskill} (\texttt{PullCube}, \texttt{PokeCube}, \texttt{LiftPeg}). Figure~\ref{fig:intro_env} provides a visualization of these tasks. For each task, the agent observes RGB images from both a wrist-mounted camera and a front-facing camera. In addition, proprioceptive observations are provided, including the end-effector position and orientation, as well as the gripper position. The action space is a 7-dimensional continuous vector in $[-1,1]$, including the control changes in the end-effector position and orientation, and the opening and closing states of the gripper.

\subsubsection{Additional Details on Setups.}
\textbf{Dataset Collection.}
For both benchmarks, we train 10 diffusion policy models to collect a diverse set of trajectories, resulting in a total of 1500 samples per task. We follow the default horizon settings of each environment: 100 steps for \texttt{Lift}, 200 for \texttt{Can} and \texttt{Square}, 50 for \texttt{PullCube}, 100 for \texttt{PokeCube}, and 150 for \texttt{LiftPeg}. We use nine of these checkpoints to construct the exploration dataset, and reserve the remaining checkpoint as a validation set for evaluating world model learning quality. 

\textbf{Reward Evaluation.} We follow the reward formulation of SAILOR~\citep{jain2025smoothseaskilledsailor}. Specifically, we train a reward model (using the same architecture and hyperparameters as in SAILOR) to score the latent states of our world model based on how expert-like they are. The reward model is trained as a discriminator between latent embeddings from expert rollouts and learner rollouts, using a moment-matching objective with a gradient penalty.

\subsubsection{Details on World Models.}
We adopt an action-conditioned world model based on Dreamer-V3~\citep{hafner2023mastering}.
Visual observations are encoded using a convolutional encoder with stride-2 convolutions, and proprioceptive states
are embedded with a 5-layer MLP. Based on empirical performance, image and state inputs
are processed by separate encoders. For decoding, image observations are reconstructed
using a transposed-convolutional decoder with stride-2 upsampling, and proprioceptive
states are reconstructed via a 5-layer MLP. 

The latent state $z_t$ comprises a deterministic recurrent component $h_t$ and a
stochastic component $s_t$. The deterministic state is modeled by a GRU with a
512-dimensional hidden state and is updated using the previous latent state $z_{t-1}$
and action $a_{t-1}$. The resulting recurrent state is then used by the dynamics model
to parameterize the distribution of the stochastic latent variable $s_t$. The number of dimensions of stochastic representation is $1024$. The number of the rollout horizon is $32$. Other training hyperparameters are given in Table~\ref{tab:wm_hyperparams}.  
\begin{table}[t]
\centering
\caption{World model training hyperparameters.}
\begin{tabular}{l|c}
\hline
\textbf{World Model} & \\ \hline
Replay capacity & $1 \times 10^{5}$ \\
Batch size & 16 \\
Batch length & 32 \\
Optimizer & Adam \\
Reconstruction loss scale & 1.0 \\
Learning rate & $1 \times 10^{-4}$ \\
\hline
\end{tabular}
\label{tab:wm_hyperparams}
\end{table}

\subsection{Details on Inverse Dynamic Models.}
\label{sec:idm_architecture}

We adopt the inverse dynamics modeling framework from CLAM~\citep{liang2025clam} as our base IDM architecture.
Given consecutive observations \((o_t, o_{t+1})\), a latent inverse dynamics model infers a continuous latent
action \(z_t = f_\phi(o_t, o_{t+1})\) that captures the underlying transition.
This latent action is then used to condition a latent forward dynamics model,
\(g_\psi(o_{t+1} \mid o_t, z_t)\), which predicts the next observation \(\hat{o}_{t+1}\). An action decoder \(p_\omega(a_t \mid z_t)\) maps the latent action back
to the environment action space.
The IDM, FDM, and action decoder are jointly trained using a combination of reconstruction and action
prediction losses over both labeled and unlabeled trajectories.
Compared to CLAM, we encourage sparsity in the latent action space by applying an
\(\ell_1\) regularization term to the inferred latent actions \(z_t\).
encouraging the model to discover structured and task-relevant factors. 

For visual observations, following CLAM, we adopt a space-time Transformer~\citep{bertasius2021space} for encoders. 
Each \(256 \times 256 \times 3\) RGB image is first partitioned into non-overlapping
\(16 \times 16\) patches, yielding 16 visual tokens per frame, which are projected
into a shared hidden space via a linear embedding layer.
The encoder is composed of the stacked ST attention layers. In the decoder, each ST block performs cross-attention between
visual tokens and the latent action representations produced by the
encoder. Other hyperparameters are given in Table~\ref{tab:idm_hyperparams}. 
\begin{table}[t]
\centering
\caption{Hyperparameters for inverse dynamics and action decoding.}
\begin{tabular}{l c}
\hline
\textbf{Hyperparameter} & \textbf{Value} \\
\hline
Num updates & 500{,}000 \\
Train action decoder every & 2 \\
Action decoder batch size & 128 \\
Action decoder loss weight & 1 \\
Action decoder hidden dim & [1024, 1024, 1024] \\
Action decoder embedding dim & 512 \\
Reconstruction loss weight & 1 \\
Sparsity loss weight & 0.1 \\
Latent action dim & 16 \\
Context len & 2 \\
Embedding dim & 128 \\
\hline
\end{tabular}
\label{tab:idm_hyperparams}
\end{table}

\subsubsection{Visualization}
Fig.~\ref{fig:vis-lift}–\ref{fig:vis-square} visualize open-loop rollouts on Robomimic-Lift and Robomimic-Square. Overall, the base model exhibits poor visual predictions, with noticeable degradation in both rendering quality and dynamical consistency. Incorporating uncertainty- and progress-aware exploration substantially improves visual fidelity with more samples, producing sharper and more coherent renderings over time. In contrast, the vanilla IDM improves the accuracy of the underlying dynamics, indicating that inverse dynamics supervision helps recover action-relevant transitions. However, over long horizons, particularly in the final one to two frames, noticeable discrepancies remain, such as misaligned gripper orientations and inaccurate object occlusions. Our sparse IDMs further mitigate these long-horizon errors, yielding more stable dynamics and better-preserved fine-grained details in the predicted rollouts.

\section{Additional Theoretical Derivation}

\subsection{Detailed Derivation for \cref{subsec:identifiability}}
\label{app:identifiability}

This appendix provides a compact, self-contained proof of the self-improvement guarantee stated in
\cref{subsec:identifiability}. The key idea is that, under a \emph{generation--verification gap}, the full transition
$(\zz^{t},\aa^{t}) \mapsto \zz^{t+1}$ may be out-of-support, while a small \emph{verification subset}
of latent variables remains on-support and is sufficient to recover the action. \methodacro uses this subset
to \emph{verify} (infer) missing action labels on OOS transitions, thereby expanding its action-labeled
support.

\subsubsection{Time-Lagged Latent Causal Model (TLCM)}
We consider a \emph{time-lagged latent causal model} (TLCM) with $k$ latent blocks
$\zz^t := (\zz_1^t,\dots,\zz_k^t)$, actions $\aa^{t}$, and observations $\xx^t$.
For clarity, we present the (deterministic) Markovian case used in our analysis:
\begin{align}
\zz^{t+1} &= g(\zz^{t}, \aa^{t}),\\
\xx^{t} &= \varphi(\zz^t),
\end{align}
where $\varphi$ is a diffeomorphism onto its image (so $\varphi^{-1}$ is well-defined on observed states).
The induced causal graph over $(\zz^{t},\aa^{t},\zz^{t+1})$ factorizes as
\begin{equation}
p(\zz^{t+1} \mid \zz^{t}, \aa^{t}) \;=\; \prod_{i=1}^k p(\zz_i^{t+1} \mid \mathrm{Pa}(\zz_i^{t+1})).
\label{eq:tlcm_factor}
\end{equation}

\subsubsection{Support, compositional OOS, and the verification subset}
Let $\mathcal{D}_{\mathrm{act}}\subset\{(\xx^{t},\aa^{t},\xx^{t+1})\}$ be the action-labeled seed dataset inducing
a seed distribution $P_{\text{seed}}(\zz^{t},\aa^{t},\zz^{t+1})$. Write
\[
S_{\text{seed}} \;:=\; \supp\big(P_{\text{seed}}(\zz^{t},\aa^{t})\big)
\]
for the on-support set of state--action pairs.

\begin{definition}[On-support vs.\ out-of-support (OOS)]
A state--action pair $(\zz^{t},\aa^{t})$ is \emph{on-support} if it lies in $S_{\text{seed}}$ and is
\emph{out-of-support} (OOS) otherwise. For any index set $\cU\subseteq\{1,\dots,k\}$ we also define the
marginal support
\[
S^{\cU}_{\text{seed}} \;:=\; \supp\big(P_{\text{seed}}(\zz_{\cU}^{t},\aa^{t})\big),
\]
and say $(\zz_{\cU}^{t},\aa^{t})$ is on-support if it lies in $S^{\cU}_{\text{seed}}$.
\end{definition}

\begin{definition}[Compositional OOS transition]
A transition $(\zz^{t},\aa^{t},\zz^{t+1})$ is \emph{compositional OOS} (w.r.t.\ $P_{\text{seed}}$) if
$(\zz^{t},\aa^{t})\notin S_{\text{seed}}$ but there exists a subset of variables $\cU$ such that
$(\zz_{\cU}^{t},\aa^{t})\in S^{\cU}_{\text{seed}}$.
Intuitively, novelty comes from an unseen \emph{combination} of factors, while at least one
subset transition remains within the training support.
\end{definition}

We now formalize which subset can serve as a verifier.

\begin{definition}[Source (insulated) set]
Let $\cS_{\mathrm{src}}\subseteq\{1,\dots,k\}$ be a set of latent blocks that is \emph{causally insulated}
from its complement in the TLCM graph:
\[
\forall i\in \cS_{\mathrm{src}}, \qquad
\mathrm{Pa}(\zz_i^{t+1}) \;\subseteq\; \{\zz_j^{t}: j\in \cS_{\mathrm{src}}\}\,\cup\,\{\aa^{t}\}.
\]
Equivalently, there are no directed edges from $\zz_{\setminus \cS_{\mathrm{src}}}^{t}$ into
$\zz_{\cS_{\mathrm{src}}}^{t+1}$.
\end{definition}

\begin{definition}[Verification subset] \label{def:verification_subset}
Let $\cS_{\mathrm{act}} := \{ i : \aa^{t}\in \mathrm{Pa}(\zz_i^{t+1})\}$ denote the \emph{action-influenced}
latent blocks. We define the \emph{verification subset} as
\[
\cS \;:=\; \cS_{\mathrm{src}} \cap \cS_{\mathrm{act}}.
\]
The subset we require to remain on-support for verification is the
\emph{intersection} of (i) source/insulated variables and (ii) action-influenced variables.
\end{definition}

\begin{assumption}[Generation--verification gap (information asymmetry)]
\label{ass:gv_gap_appendix}
There exists a verification subset $\cS$ such that for every (potentially OOS) transition
$(\zz^{t},\aa^{t},\zz^{t+1})$ we wish to label, the restricted state--action pair remains on-support:
\[
(\zz_{\cS}^{t},\aa^{t}) \in S^{\cS}_{\text{seed}},
\]
even though $(\zz^{t},\aa^{t})$ may lie outside $S_{\text{seed}}$.
\end{assumption}

\subsubsection{Two identifiability ingredients from prior work}
Our proof uses (i) identifiability of the latent blocks (up to permutation / element-wise transforms)
and (ii) identifiability of the action from on-support subset transitions.

\begin{condition}[Identifiable latent blocks via mechanism sparsity]
\label{cond:pd_ident}
This condition is adapted from Proposition~7 (together with Assumption~5) of
\citet{lachapelle2024nonparametric}. Consider a TLCM whose observation model is a diffeomorphism (so $\varphi$
is invertible on its image) and whose transition model is Markov with respect to a sparse dependency
graph between $(\zz^{t},\aa^{t})$ and $\zz^{t+1}$ (as in \cref{eq:tlcm_factor}).
Assume we learn a second TLCM $(\hat{\varphi},\hat{g},\hat{G})$ that is $(\zz,\aa)$-consistent with the
ground-truth model in the sense of \citet{lachapelle2024nonparametric} and that the ground-truth graph
satisfies their \emph{graphical criterion} (Assumption~5).
Then the learned latent variables are identifiable up to a permutation and element-wise invertible
transformations (``complete disentanglement''), so we can treat the learned blocks as the true blocks
up to a fixed relabeling.
\end{condition}

\begin{condition}[Identifiable action from subset transitions]
\label{cond:la_ident}
This condition is adapted from Theorem~1 of \citet{lachapelle2025identifiability} by substituting
$\xx\equiv\zz_{\cS}^{t}$ and $\xx'\equiv\zz_{\cS}^{t+1}$.
Let $\cS$ be a fixed verification subset and define the subset dynamics
$g_{\cS}(\zz_{\cS}^{t},\aa^{t}) := [g(\zz^{t},\aa^{t})]_{\cS}$ (well-defined whenever
$\cS\subseteq \cS_{\mathrm{src}}$).
Assume the following hold on $S^{\cS}_{\text{seed}}$:
\begin{enumerate}
    \item \textbf{Continuity:} for each action value $\aa$, the map $\zz_{\cS}^{t}\mapsto g_{\cS}(\zz_{\cS}^{t},\aa)$ is continuous;
    \item \textbf{Injectivity:} for every $\zz_{\cS}^{t}$, $g_{\cS}(\zz_{\cS}^{t},\aa_1)=g_{\cS}(\zz_{\cS}^{t},\aa_2)$ implies $\aa_1=\aa_2$;
    \item \textbf{Connected conditional support:} for each $\aa$ in the action support, $\supp\big(P_{\text{seed}}(\zz_{\cS}^{t}\mid \aa)\big)$ is connected;
    \item \textbf{Support overlap:} for any $\aa_1,\aa_2$ in the action support,
    $\supp\big(P_{\text{seed}}(\zz_{\cS}^{t}\mid \aa_1)\big)\cap\supp\big(P_{\text{seed}}(\zz_{\cS}^{t}\mid \aa_2)\big)\neq\varnothing$.
\end{enumerate}
Then the action is identifiable from the subset transition $(\zz_{\cS}^{t},\zz_{\cS}^{t+1})$ up to a
fixed relabeling: there exists an injective map $v$ (independent of $\zz_{\cS}^{t}$) such that any
solution of the latent-action reconstruction problem in \citet{lachapelle2025identifiability} recovers
$v(\aa)$ deterministically from $(\zz_{\cS}^{t},\zz_{\cS}^{t+1})$.
In particular, when the learned action alphabet matches the true discrete action set, this corresponds
to a permutation of action labels.
\end{condition}

\subsubsection{Verified self-improvement}
We now state and prove the main appendix result.

\begin{theorem}[Identifiability of Self-Improvement]
\label{thm:master}
Let $\mathcal{D}_{\mathrm{act}}\subset\{(\xx^{t},\aa^{t},\xx^{t+1})\}$ be action-labeled transitions sampled from
$P_{\text{seed}}$, and let $(\xx^{*,t},\xx^{*,t+1})$ be an additional unlabeled transition sampled from some
test distribution $p_{test}$. Let $(\zz^{*,t},\aa^{*},\zz^{*,t+1})$ denote the corresponding latent
transition in the TLCM, i.e.\ $\zz^{*,t+1}=g(\zz^{*,t},\aa^*)$ and $\xx^{*,t+1}=\varphi(\zz^{*,t+1})$.
Assume:
\begin{enumerate}
    \item \textbf{Latent blocks are identified} up to a fixed permutation / element-wise transform
    (Condition~\ref{cond:pd_ident});
    \item \textbf{Generation--verification gap} holds for the verification subset $\cS$
    (Assumption~\ref{ass:gv_gap_appendix});
    \item \textbf{Action is identifiable from subset transitions} on $S^{\cS}_{\text{seed}}$
    (Condition~\ref{cond:la_ident}).
\end{enumerate}
Let $h_{\psi}:(\zz_{\cS}^{t},\zz_{\cS}^{t+1})\mapsto\aa^{t}$ be an inverse dynamics model trained
on the on-support subset transitions in $\mathcal{D}_{\mathrm{act}}$. Then the missing action label for the
unlabeled transition is uniquely determined by $(\zz_{\cS}^{*,t},\zz_{\cS}^{*,t+1})$, and
\[
\hat{\aa}^{*} \;:=\; h_{\psi}(\zz_{\cS}^{*,t},\zz_{\cS}^{*,t+1})
\]
recovers the true action (up to the fixed label relabeling in Condition~\ref{cond:la_ident}; with labeled
data this relabeling is resolved so that $\hat{\aa}^{*}=\aa^{*}$). Consequently, the tuple
$(\xx^{*,t},\hat{\aa}^{*},\xx^{*,t+1})$ is correctly labeled while it may satisfy
$(\zz^{*,t},\aa^{*})\notin S_{\text{seed}}$, i.e.\ it can expand the action-labeled support.
\end{theorem}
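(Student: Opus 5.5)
The plan is to chain the three assumptions so that the unlabeled test transition is reduced to an on-support problem on the verification subset $\cS$, where the inverse map is well defined and pinned down by the seed data. \textbf{Step 1: reduce to latents.} By Condition~\ref{cond:pd_ident}, the learned encoder $\hat{\varphi}^{-1}$ recovers $\zz^{t}$ up to a fixed permutation and element-wise invertible maps. After relabeling the blocks once and for all, we may therefore take $\hat{\zz}_{\cS}=\zz_{\cS}$ up to a bijection that is the same on seed and test data. Since the inverse model only ever sees these coordinates, this bijection can be absorbed into $h_\psi$. So it suffices to argue at the level of true latents $(\zz_{\cS}^{*,t},\zz_{\cS}^{*,t+1})$.

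\textbf{Step 2: the subset transition is a closed mechanism that stays on support.} Because $\cS\subseteq\cS_{\mathrm{src}}$, the insulation property gives $\zz_{\cS}^{*,t+1}=g_{\cS}(\zz_{\cS}^{*,t},\aa^*)$. This is the same function $g_{\cS}$ that generated the seed subset transitions, with no dependence on the OOS remainder $\zz_{\setminus\cS}^{*,t}$. By Assumption~\ref{ass:gv_gap_appendix}, the pair $(\zz_{\cS}^{*,t},\aa^*)$ lies in $S^{\cS}_{\text{seed}}$. Hence the test subset transition is a point of the graph of $g_{\cS}$ restricted to the seed support, even though the full pair $(\zz^{*,t},\aa^*)$ may lie outside $S_{\text{seed}}$.

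\textbf{Step 3: uniqueness and recovery.} Injectivity in Condition~\ref{cond:la_ident} gives uniqueness: any $\aa$ with $g_{\cS}(\zz_{\cS}^{*,t},\aa)=\zz_{\cS}^{*,t+1}$ must equal $\aa^*$. This shows that the missing label is determined by $(\zz_{\cS}^{*,t},\zz_{\cS}^{*,t+1})$. Recovery then follows from Condition~\ref{cond:la_ident}, which says every solution of the latent-action problem fit on $S^{\cS}_{\text{seed}}$ outputs $v(\aa)$ deterministically as a function of the subset transition. The conditions doing the work here are continuity, connected conditional supports and support overlap, which prevent the learned labeling from being locally permuted across disconnected regions. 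Evaluating at the on-support test point yields $v(\aa^*)$. With action labels in $\mathcal{D}_{\mathrm{act}}$ available, $v$ is fixed to the identity: the fitted $h_\psi$ matches $\aa$ on seed points, and $v$ is a single global map. Hence $\hat{\aa}^*=\aa^*$, the labeled tuple is correct, and the support is expanded.

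\textbf{Main obstacle.} The delicate step is the passage from ``identifiable'' to ``the trained $h_\psi$ outputs the right value at the test point.'' This requires $h_\psi$ to be an exact (population-level) minimizer on the seed support, and it requires the test point to lie in the support where that minimizer is determined, not merely near it. I would handle this by stating the result at the population/infinite-data level, reading ``trained on'' as ``agrees with the seed inverse map $\{(z,g_{\cS}(z,a))\mapsto a\}$ on its domain.'' Under that reading, Step 2 shows the test point lies in that domain and Step 3 gives the conclusion; finite-sample or approximate fitting would be left outside the claim.
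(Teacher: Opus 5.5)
Your proposal is correct and is essentially the paper's proof. Like the paper, you align the latents via Condition~\ref{cond:pd_ident}, place $(\zz_{\cS}^{*,t},\aa^{*})$ in $S^{\cS}_{\text{seed}}$ via Assumption~\ref{ass:gv_gap_appendix}, then apply Condition~\ref{cond:la_ident} and use the labels in $\mathcal{D}_{\mathrm{act}}$ to fix the relabeling; your injectivity-based uniqueness step and the population-level reading of ``trained on'' only make explicit what the paper leaves implicit. One caution, shared with the paper's parenthetical in Condition~\ref{cond:la_ident}: $\cS\subseteq\cS_{\mathrm{src}}$ only excludes parents outside $\cS_{\mathrm{src}}$, so blocks in $\cS_{\mathrm{src}}\setminus\cS_{\mathrm{act}}$ could still drive $\zz_{\cS}^{t+1}$, and the Step~2 identity $\zz_{\cS}^{*,t+1}=g_{\cS}(\zz_{\cS}^{*,t},\aa^{*})$ should therefore be taken from hypothesis~3 (well-definedness of $g_{\cS}$ on $(\zz_{\cS}^{t},\aa^{t})$, as assumed in Proposition~\ref{prop:oos_id}(i)) rather than derived from insulation alone.
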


\begin{proof}
By Condition~\ref{cond:pd_ident} (a restatement of \citet[Prop.~7]{lachapelle2024nonparametric} in our
notation), the learned representation can be aligned with the ground-truth latent blocks up to a fixed
permutation and element-wise invertible transforms. This alignment preserves the block structure and
(up to a fixed relabeling) the parent/child relations in the TLCM graph, so the verification subset
$\cS=\cS_{\mathrm{src}}\cap\cS_{\mathrm{act}}$ is well-defined and accessible from observations via the
learned encoder.

By Assumption~\ref{ass:gv_gap_appendix}, the subset state--action pair
$(\zz_{\cS}^{*,t},\aa^*)$ lies in the on-support set $S^{\cS}_{\text{seed}}$, even if the full pair
$(\zz^{*,t},\aa^*)$ is OOS. Therefore, the on-support subset transitions contained in
$\mathcal{D}_{\mathrm{act}}$ are sufficient to train an inverse model $h_{\psi}$ for $g_{\cS}$.

Finally, by Condition~\ref{cond:la_ident} (adapted from \citet[Thm.~1]{lachapelle2025identifiability}
with $\xx\equiv\zz_{\cS}^{t}$ and $\xx'\equiv\zz_{\cS}^{t+1}$), the action is identifiable from the
subset transition $(\zz_{\cS}^{t},\zz_{\cS}^{t+1})$ up to a fixed relabeling. Hence applying
$h_{\psi}$ to the on-support subset transition $(\zz_{\cS}^{*,t},\zz_{\cS}^{*,t+1})$ recovers the
correct action label (after resolving the fixed relabeling using the labeled actions in
$\mathcal{D}_{\mathrm{act}}$). This yields the correctly labeled tuple
$(\xx^{*,t},\hat{\aa}^{*},\xx^{*,t+1})$, which can lie outside the original support
$S_{\text{seed}}$ and thus expands the action-labeled coverage.
\end{proof}

\subsubsection{Implications of the generation--verification gap}
\label{sec:practical_implications}

We now interpret Theorem~\ref{thm:master} and Assumption~\ref{ass:gv_gap_appendix} through a practical lens, identifying when \methodacro is most beneficial, when it degrades, and when it fails.

\paragraph{How large is the gap?}
\methodacro separates \emph{verification} (recovering the missing action) from \emph{generation} (predicting the full next state).
Verification only uses the subset transition on $\zz_{\cS}$, while generation must model the full $\zz$.
As a rule of thumb, \methodacro becomes most attractive when $\dim(\zz_{\cS}) \lll \dim(\zz)$: the inverse model stays simple and stable even as the world model faces increasingly many OOS compositions.

\paragraph{Condition 1: fixed verifier, growing scene (maximum benefit).}
If $\zz_{\cS}$ is agent-centric and fixed-dimensional (e.g., proprioception) while the rest of the scene grows in complexity (more objects, tools, contacts), then action recovery continues to rely on the same low-dimensional signal while the world model must extrapolate over a much larger state space.
In this regime, the benefit of \methodacro grows with scene complexity.
\emph{Example (warehouse robot).}
A 7-DoF manipulator arm has $\dim(\zz_{\cS}) = 7$ (joint angles/velocities).
In a warehouse with many objects, the world model must predict the state of each object (and their interactions), so $\dim(\zz)$ grows with scene complexity.
As scene complexity grows, predicting the full next state requires accounting for many interacting factors beyond the agent's direct control, which increases the difficulty of accurate forward prediction. In many control settings, however, the most useful signal is the action-imprinted change. This motivates focusing the inverse model on an agent-centric subset $\zz_{\cS}$, where action recovery can remain stable as the rest of the scene grows, yielding a practical forward–inverse gap that we exploit for self-improvement.

\paragraph{Condition 2: compositional OOS with preserved source-set (strong benefit).}
\methodacro succeeds when OOS novelty is concentrated in $\zz_{\setminus\cS}$ while the verifying subset behaves as it did in training.
Concretely, even when the full pair $(\zz^{t}, \aa^{t}) \notin S_{\text{seed}}$, the subset transition on $\zz_{\cS}$ remains on-support, so an inverse model trained on $\mathcal{D}_{\mathrm{act}}$ can still recover $\aa^{t}$ reliably.
\emph{Example (tool--object contact).}
Training contains ``move knife in free space'' and ``touch apple with hand,'' but not ``slice apple with knife.''
At test time, the full transition is OOS (contact dynamics are novel), yet the arm motion $\zz_{\cS}$ follows familiar trajectories. The inverse model can still recover the action, enabling the world model to learn the novel contact outcome.

\paragraph{Stochastic extension.}
For the remaining discussion we consider a stochastic generalization of the TLCM where $\zz^{t+1} = g(\zz^t, \aa^t, \bm{\epsilon}^t)$ with exogenous noise $\bm{\epsilon}^t$; Theorem~\ref{thm:master} holds in the deterministic special case $\bm{\epsilon}^t = \mathbf{0}$.

\paragraph{Condition 3: weak injectivity / action aliasing (degradation).}
Condition~\ref{cond:la_ident} (2) requires the mapping $\aa \mapsto g_{\cS}(\zz_{\cS}^{t}, \aa, \bm{\epsilon})$ to be injective.
When different actions produce indistinguishable (or nearly indistinguishable) subset transitions, recovered actions become ambiguous:
\begin{align}
\exists\, \aa \neq \aa' \;\text{s.t.}\; g_{\cS}(\zz_{\cS}^{t}, \aa, \bm{\epsilon}) \approx g_{\cS}(\zz_{\cS}^{t}, \aa', \bm{\epsilon}).
\end{align}
\emph{Example (underactuation / latency).}
In a soft gripper, different motor commands may produce nearly identical proprioceptive changes due to compliance.
Similarly, unmodeled communication delays can smear the action's effect across timesteps, violating injectivity.
In such cases, pseudo-labels drift and self-improvement degrades.

\paragraph{Condition 4: back-action from OOS into the verifier (failure).}
The verifying subset must remain insulated from OOS components (the \emph{source set} property of Definition~3).
\methodacro fails when OOS variables feed back into the verifier so that $\zz_{\cS}$ itself goes out-of-support.
One way to view the failure mode is that the verifier dynamics no longer depend only on $(\zz_{\cS}^{t}, \aa^{t})$, but also on $\zz_{\setminus\cS}^{t}$:
\begin{align}
\zz_{\cS}^{t+1} = g_{\cS}(\zz_{\cS}^{t}, \aa^{t}, \zz_{\setminus\cS}^{t}, \bm{\epsilon}^{t}).
\end{align}
\emph{Example (compliant contact).}
When a robot arm makes stiff contact with a deformable object, contact forces feed back into joint-level torques and velocities.
The ``verifying'' proprioceptive dynamics now depend on the OOS object state, breaking the source-set assumption and causing action recovery to fail.

\paragraph{Summary.}
\methodacro is most effective when (i) the verifying subset is small and fixed-dimensional relative to the full state, (ii) OOS novelty is confined to non-verifying blocks while $\zz_{\cS}$ stays on-support, and (iii) the action imprint on $\zz_{\cS}$ is strong (high injectivity).
It degrades under action aliasing and fails when OOS dynamics causally influence the verifier.

\subsection{Detailed Derivation for \cref{subsec:sample_complexity}}
\label{app:sample_complexity}

This appendix provides the exact derivation behind \cref{subsec:sample_complexity}. The purpose of the analysis is to isolate the statistical asymmetry exploited by \methodacro: predicting the full next state can be substantially harder than verifying the action from a low-dimensional action-relevant slice.

\paragraph{Setup.}
Let $s \in \R^{d_s}$ be the state, $a \in \R^{d_a}$ the action, and suppose the one-step dynamics are linear with additive Gaussian noise:
\begin{equation}
    s' = As + Ba + \xi,
    \qquad
    \xi \sim \mathcal{N}(0,\sigma_s^2 \Id_{d_s}).
    \label{eq:analysis_fwd_app}
\end{equation}
Assume also that there exists an action-relevant slice $z = Ms \in \R^{d_z}$, with $d_z \ll d_s$, from which the action can be linearly recovered up to irreducible ambiguity:
\begin{equation}
    a = H
    \begin{bmatrix}
    z \\
    z'
    \end{bmatrix}
    + \eta,
    \qquad
    z' = Ms',
    \qquad
    \eta \sim \mathcal{N}(0,\sigma_a^2 \Id_{d_a}).
    \label{eq:analysis_inv_app}
\end{equation}
We compare a dense forward regressor $\hat f$ trained on
\[
    x_F := \begin{bmatrix} s \\ a \end{bmatrix} \in \R^{d_s+d_a}
\]
and a sparse inverse regressor $\hat h$ trained on
\[
    x_I := \begin{bmatrix} z \\ z' \end{bmatrix} \in \R^{2d_z}.
\]
For analytic tractability, we assume both feature vectors have been whitened:
\begin{equation}
    x_F \sim \mathcal{N}(0,\Id_{d_s+d_a}),
    \qquad
    x_I \sim \mathcal{N}(0,\Id_{2d_z}),
    \label{eq:analysis_whitened_app}
\end{equation}
and both models are fit by ordinary least squares on $n$ i.i.d.\ labeled transitions.
(The whitening assumption simplifies the algebra; for general covariance $\Sigma$ the excess risk scales with $\mathrm{tr}(\Sigma^{-1})$---see, \eg, \citet{hsu2014random}---and the qualitative three-factor decomposition is preserved.)

As in the main text, we compare both models in the state space:
\begin{align}
    \mathcal{E}_F &:= \frac{1}{d_s}\,\E\!\left[\|\hat f(s,a)-f^\star(s,a)\|_2^2\right],
    \label{eq:analysis_ef_app} \\
    \mathcal{E}_I &:= \frac{1}{d_s}\,\E\!\left[\|f^\star(s,\hat h(z,z')) - f^\star(s,h(z,z'))\|_2^2\right].
    \label{eq:analysis_ei_app}
\end{align}

\begin{lemma}[OLS excess risk under isotropic Gaussian covariates]
\label{lem:ols_excess_app}
Consider scalar regression
\[
    y = \langle w^\star, x \rangle + \epsilon,
    \qquad
    x \sim \mathcal{N}(0,\Id_D),
    \qquad
    \epsilon \sim \mathcal{N}(0,\nu^2),
\]
with $n > D+1$. If $\hat w$ is the OLS estimator fit on $n$ i.i.d. samples, then its expected excess risk is
\begin{equation}
    \E\!\left[(\langle \hat w-w^\star, x\rangle)^2\right]
    = \nu^2\,\frac{D}{n-D-1}.
    \label{eq:ols_excess_app}
\end{equation}
This is a classical exact expression for well-specified linear regression with isotropic Gaussian design; see, \eg, \citet{hsu2014random,mourtada2022exact}.
\end{lemma}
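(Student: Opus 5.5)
The plan is to reduce the excess risk to the trace of an inverse Wishart matrix and then compute that trace.

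Stack the $n$ samples as a design matrix $X\in\R^{n\times D}$ with rows $x_i^\top$, responses $y=Xw^\star+\epsilon$, and noise $\epsilon\sim\mathcal{N}(0,\nu^2\Id_n)$ independent of $X$. Since $n>D$ and the rows are Gaussian, $X^\top X$ is almost surely invertible. Hence $\hat w=(X^\top X)^{-1}X^\top y$, and
\[
\hat w-w^\star=(X^\top X)^{-1}X^\top\epsilon .
\]

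\textbf{Step 1: reduce to a trace.} For a fresh test point $x\sim\mathcal{N}(0,\Id_D)$ independent of the training data, $\E[xx^\top]=\Id_D$. Therefore
\[
\E\bigl[\langle\hat w-w^\star,x\rangle^2\mid X,\epsilon\bigr]=\|\hat w-w^\star\|_2^2 .
\]
Averaging over $\epsilon$ with $X$ fixed gives
\[
\E\bigl[\|\hat w-w^\star\|_2^2\mid X\bigr]=\nu^2\,\mathrm{tr}\bigl((X^\top X)^{-1}X^\top X(X^\top X)^{-1}\bigr)=\nu^2\,\mathrm{tr}\bigl((X^\top X)^{-1}\bigr).
\]
So the claim reduces to showing $\E[\mathrm{tr}((X^\top X)^{-1})]=D/(n-D-1)$.

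\textbf{Step 2: the inverse Wishart mean.} This is the main obstacle; the rest is bookkeeping. The plan is the standard two-part argument.

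First, use rotation invariance. For any orthogonal $Q$, the matrices $XQ$ and $X$ have the same distribution. Hence $W^{-1}:=(X^\top X)^{-1}$ satisfies $Q^\top\E[W^{-1}]Q=\E[W^{-1}]$ for every $Q$, which forces $\E[W^{-1}]=c\,\Id_D$ (provided the expectation is finite).

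Second, identify $c$ from one diagonal entry. By the Schur complement / block-inverse formula,
\[
1/[W^{-1}]_{11}=\|P^\perp X_{:,1}\|_2^2,
\]
where $P^\perp$ projects onto the orthogonal complement of the span of the other $D-1$ columns. That span has dimension $n-D+1$ almost surely, and $X_{:,1}$ is independent of the other columns. Hence, conditionally on those columns, $\|P^\perp X_{:,1}\|_2^2\sim\chi^2_{n-D+1}$, and the same holds unconditionally. Using $\E[1/\chi^2_k]=1/(k-2)$ for $k>2$, we get
\[
c=\frac{1}{n-D-1}.
\]
The requirement $k=n-D+1>2$ is exactly the hypothesis $n>D+1$, and it also guarantees that the expectation used in the rotation argument is finite.

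\textbf{Step 3: combine.} Substituting into Step 1 gives
\[
\E\bigl[\langle\hat w-w^\star,x\rangle^2\bigr]=\nu^2\,\mathrm{tr}(c\,\Id_D)=\nu^2\,\frac{D}{n-D-1},
\]
which is \eqref{eq:ols_excess_app}.
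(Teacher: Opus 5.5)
Your proof is correct, and it is the standard argument behind the references the paper cites in place of a proof (the paper gives none of its own): reduce to $\nu^2\,\E[\mathrm{tr}((X^\top X)^{-1})]$ via $\E[xx^\top]=\Id_D$, then use the inverse-Wishart mean $\E[(X^\top X)^{-1}]=\Id_D/(n-D-1)$, which you derive yourself from the Schur complement and $\E[1/\chi^2_k]=1/(k-2)$. One wording slip: the orthogonal complement of the other $D-1$ columns has dimension $n-D+1$, not their span (which has dimension $D-1$); also, the rotation-invariance step is unnecessary for the trace, since the same Schur-complement computation applied to each column $j$ gives $\E[(X^\top X)^{-1}]_{jj}=1/(n-D-1)$ directly, so you never need finiteness of the off-diagonal expectations.
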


\begin{proposition}[Exact forward--inverse gap in the linear--Gaussian model]
\label{prop:forward_inverse_gap_formal}
Under the setup above, let $\lambda := \|B\|_{\op}$.
If $n > d_s+d_a+1$ and $n > 2d_z+1$, then
\begin{align}
    \E[\mathcal{E}_F]
    &=
    \sigma_s^2\,\frac{d_s+d_a}{n-(d_s+d_a)-1},
    \label{eq:analysis_forward_exact_app} \\
    \E[\mathcal{E}_I]
    &\le
    \lambda^2\,\frac{d_a}{d_s}\,\sigma_a^2\,\frac{2d_z}{n-2d_z-1}.
    \label{eq:analysis_inverse_exact_app}
\end{align}
Consequently, the error ratio satisfies
\begin{equation}
    \Gamma(n)
    := \frac{\E[\mathcal{E}_F]}{\E[\mathcal{E}_I]}
    \;\ge\;
    \Bigl(\frac{d_s+d_a}{2d_z}\cdot\frac{d_s}{d_a}\Bigr)
    \cdot
    \Bigl(\frac{\sigma_s}{\lambda\,\sigma_a}\Bigr)^2
    \cdot
    \Bigl(\frac{n-2d_z-1}{n-(d_s+d_a)-1}\Bigr).
    \label{eq:analysis_ratio_exact_app}
\end{equation}
\end{proposition}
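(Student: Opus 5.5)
The plan is to reduce both claims to Lemma~\ref{lem:ols_excess_app}, applied coordinate-wise, and then take the ratio.

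\textbf{Forward error.} The true map is $f^\star(s,a)=W^\star x_F$ with $W^\star=[A\;B]\in\R^{d_s\times(d_s+d_a)}$. Each coordinate $s'_i$ is therefore a well-specified scalar regression on $x_F\sim\mathcal N(0,\Id_{d_s+d_a})$, with noise $\xi_i\sim\mathcal N(0,\sigma_s^2)$. Multi-output OLS decouples into $d_s$ independent scalar OLS problems that share the same design. Applying the lemma with $D=d_s+d_a$ and $\nu=\sigma_s$ gives, for each coordinate, excess risk $\sigma_s^2 (d_s+d_a)/(n-d_s-d_a-1)$. Summing over the $d_s$ coordinates and dividing by $d_s$ yields \eqref{eq:analysis_forward_exact_app} with equality. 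Here I read the expectation as being over both the training sample and a fresh test point, which is the sense the lemma uses.

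\textbf{Inverse error.} Since $f^\star$ is affine in $a$, we have $f^\star(s,\hat h)-f^\star(s,h)=B(\hat h(x_I)-h(x_I))$. Hence $\|B(\hat h-h)\|_2^2\le\lambda^2\|\hat h-h\|_2^2$. The inverse regression $a=Hx_I+\eta$ is well specified with $x_I\sim\mathcal N(0,\Id_{2d_z})$. Each of the $d_a$ action coordinates is then a scalar OLS problem with $D=2d_z$ and $\nu=\sigma_a$. So $\E\|\hat h-h\|_2^2=d_a\sigma_a^2\,2d_z/(n-2d_z-1)$, and multiplying by $\lambda^2/d_s$ gives \eqref{eq:analysis_inverse_exact_app}.

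The main obstacle is legitimacy rather than algebra. Model \eqref{eq:analysis_inv_app} treats $\eta$ as independent of $x_I$, so the inverse regression must be taken as postulated and well specified, even though $a$ and $s'$ are jointly generated. I would state that this is an assumption of the stylized model and not derived from \eqref{eq:analysis_fwd_app}. I would also note that the test-time $x_I$ in $\mathcal E_I$ follows the same whitened distribution. A related point is that $B$ applies to the excess action error, which is a function of $x_I$ and training data only. The operator-norm bound needs no independence, so this step is safe.

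\textbf{Ratio.} Dividing the exact forward expression by the upper bound on $\E[\mathcal E_I]$ gives a lower bound on $\Gamma(n)$. Rearranging the factors produces \eqref{eq:analysis_ratio_exact_app}:
\begin{itemize}
\item the dimensionality factor $\tfrac{d_s+d_a}{2d_z}\cdot\tfrac{d_s}{d_a}$;
\item the stochasticity factor $(\sigma_s/(\lambda\sigma_a))^2$;
\item the sample-size factor $\tfrac{n-2d_z-1}{n-d_s-d_a-1}$.
\end{itemize}
The conditions $n>d_s+d_a+1$ and $n>2d_z+1$ are what make every denominator positive.
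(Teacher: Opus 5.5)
Your proposal is correct and matches the paper's proof step for step. You apply Lemma~\ref{lem:ols_excess_app} coordinate-wise with $D=d_s+d_a$ for the forward model, and with $D=2d_z$ summed over the $d_a$ action coordinates for the inverse model; you then use the pointwise bound $\|B(\hat h-h)\|_2^2\le\lambda^2\|\hat h-h\|_2^2$ to return to state space and divide the exact forward value by the inverse upper bound. Your caveat that the well-specified inverse regression, with noise independent of $x_I$, is a modeling postulate of the stylized setup is fair, and the paper leaves it implicit.
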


\begin{proof}
We apply \cref{lem:ols_excess_app} separately to the forward and inverse regressions.

\paragraph{Forward model.}
Each coordinate of $s'$ in \eqref{eq:analysis_fwd_app} is a scalar linear regression on the feature vector $x_F \in \R^{d_s+d_a}$ with noise variance $\sigma_s^2$. Therefore,
\[
    \E\!\left[(\hat f_j(s,a)-f^\star_j(s,a))^2\right]
    = \sigma_s^2\,\frac{d_s+d_a}{n-(d_s+d_a)-1}
\]
for each state coordinate $j \in \{1,\dots,d_s\}$. Averaging over the $d_s$ coordinates yields
\[
    \E[\mathcal{E}_F]
    =
    \sigma_s^2\,\frac{d_s+d_a}{n-(d_s+d_a)-1},
\]
which is exactly \eqref{eq:analysis_forward_exact_app}.

\paragraph{Inverse model in action space.}
Similarly, each coordinate of $a$ in \eqref{eq:analysis_inv_app} is a scalar linear regression on $x_I \in \R^{2d_z}$ with noise variance $\sigma_a^2$. Hence
\[
    \E\!\left[(\hat h_k(z,z')-h_k(z,z'))^2\right]
    = \sigma_a^2\,\frac{2d_z}{n-2d_z-1}
\]
for each action coordinate $k \in \{1,\dots,d_a\}$. Summing across the $d_a$ coordinates gives
\begin{equation}
    \E\!\left[\|\hat h(z,z')-h(z,z')\|_2^2\right]
    = d_a\,\sigma_a^2\,\frac{2d_z}{n-2d_z-1}.
    \label{eq:analysis_action_error_app}
\end{equation}

\paragraph{Mapping inverse error back to state space.}
Because the true dynamics $f^\star$ are linear in the action,
\[
    f^\star(s,\hat h(z,z')) - f^\star(s,h(z,z'))
    = B\bigl(\hat h(z,z')-h(z,z')\bigr).
\]
Therefore,
\begin{align}
    \mathcal{E}_I
    &= \frac{1}{d_s}\,\E\!\left[\left\|B\bigl(\hat h(z,z')-h(z,z')\bigr)\right\|_2^2\right] \\
    &\le \frac{\|B\|_{\op}^2}{d_s}\,\E\!\left[\|\hat h(z,z')-h(z,z')\|_2^2\right] \\
    &= \lambda^2\,\frac{d_a}{d_s}\,\sigma_a^2\,\frac{2d_z}{n-2d_z-1},
\end{align}
which proves \eqref{eq:analysis_inverse_exact_app} after taking expectations and using \eqref{eq:analysis_action_error_app}.

Finally, dividing \eqref{eq:analysis_forward_exact_app} by \eqref{eq:analysis_inverse_exact_app} yields \eqref{eq:analysis_ratio_exact_app}.
\end{proof}

\paragraph{Reading the bound.}
The ratio in \eqref{eq:analysis_ratio_exact_app} cleanly factorizes into three interpretable pieces.
The first term is a \emph{dimensionality advantage}: the forward model must estimate a map from $d_s+d_a$ inputs, whereas the sparse inverse model only uses $2d_z$ inputs.
The second term is a \emph{stochasticity advantage}: forward prediction suffers from environment noise $\sigma_s$, while inverse verification only suffers from the ambiguity of recovering the action from the selected slice, measured by $\sigma_a$, after accounting for the state-space gain $\lambda$.
The third term is a \emph{sample-size advantage}: when $n$ is only modestly larger than the dense forward dimension, the forward estimator is statistically much less stable.

\paragraph{Scope of the stylized model.}
This analysis is intentionally minimal. It does not claim that real robotic dynamics are linear or globally Gaussian. Instead, it isolates a statistical regime that matches the intuition behind \methodacro: if a low-dimensional subset preserves the action imprint while the full scene is high-dimensional, noisy, and sparsely labeled, then sparse inverse verification can be substantially more reliable than dense forward prediction.





\begin{figure*}[p]
    \centering
    \includegraphics[width=0.9\linewidth]{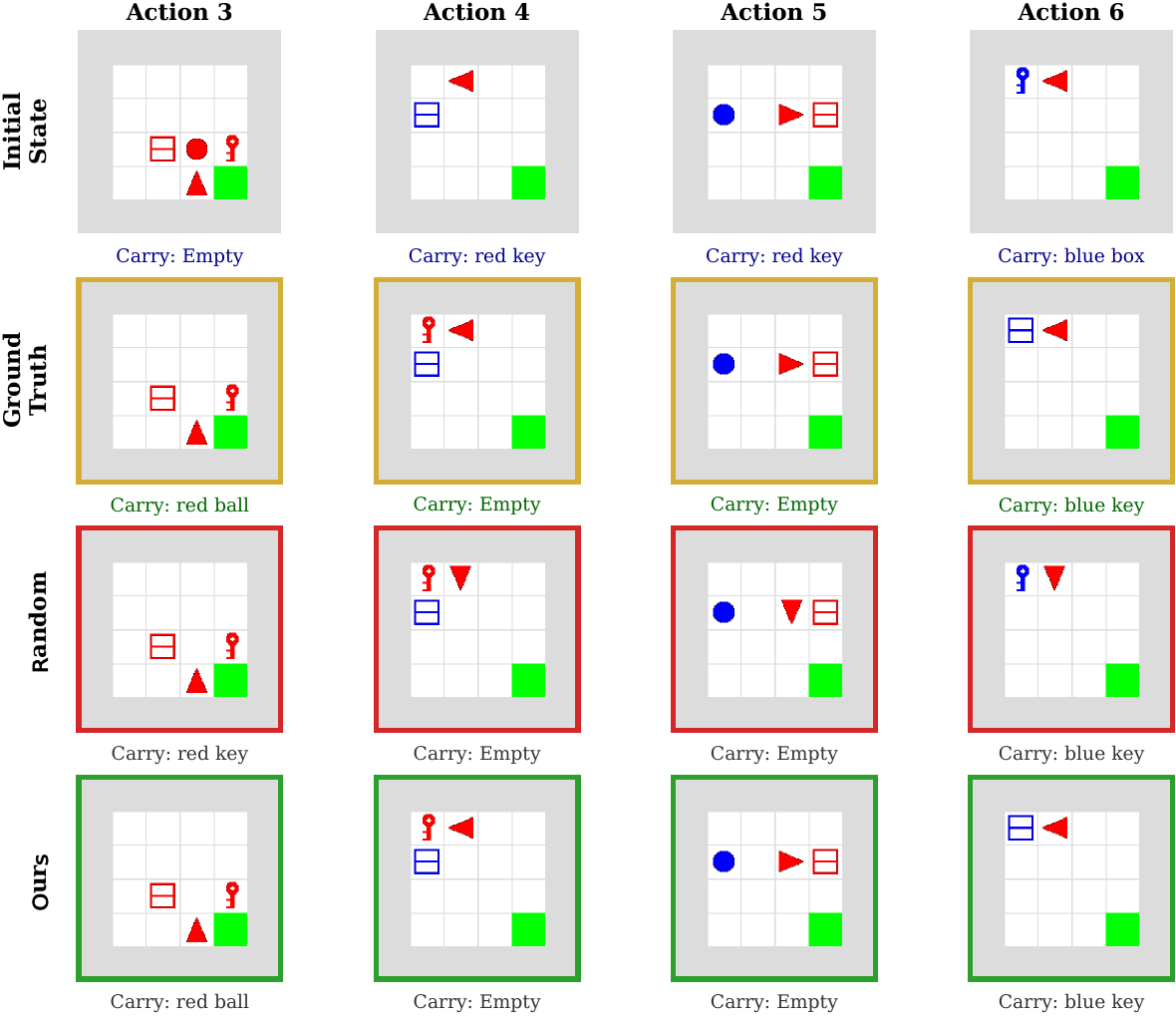}
    \caption{\textbf{Qualitative comparison of world-model rollouts under diverse interaction actions (Part I--II).}
    \textbf{Gold} borders denote ground-truth next observations; \textcolor[rgb]{0, 0.5, 0}{green} and \textcolor[rgb]{0.8, 0, 0}{red} borders indicate correct and incorrect predictions, respectively.
    Across both task sets, our method better preserves action-dependent state changes---notably for structured interactions such as \textit{Toggle} and \textit{Swap}---than exploration-based baselines.}
    \label{fig:qual_part1}
\end{figure*}

\begin{figure*}[p]
    \centering
    \ContinuedFloat
    \includegraphics[width=0.9\linewidth]{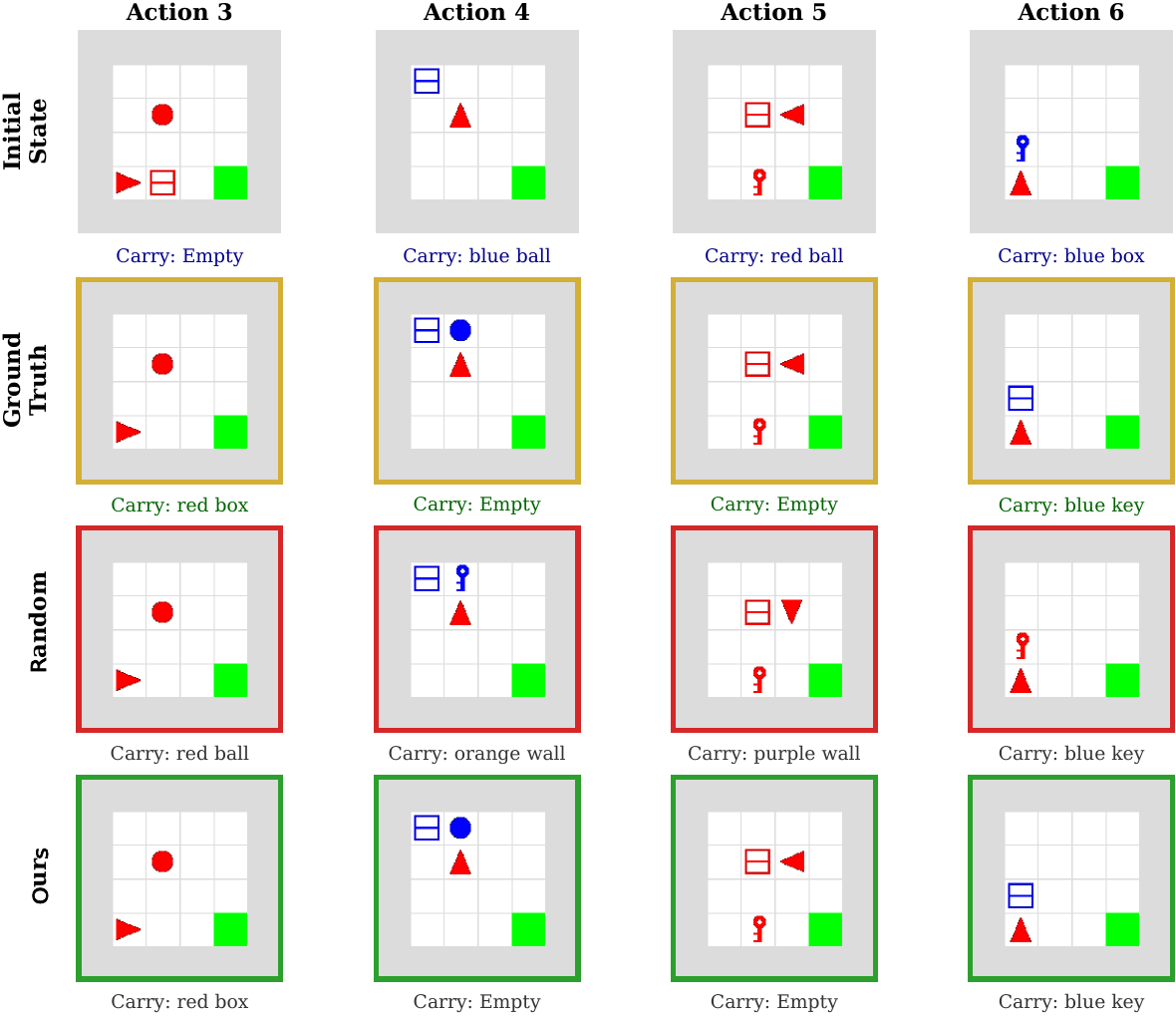}
    \caption{(Continued.) \textbf{Task Set B.} The \textbf{Random} baseline frequently collapses to predicting the most common primitive motions (e.g., \textit{Turn}), failing to model interaction-induced state changes (e.g., \textit{Toggle}). In contrast, models trained with data selected by our method capture these state transitions.}
    \label{fig:qual_part2}
\end{figure*}

\begin{figure*}[t]
    \centering
        \includegraphics[width=\linewidth]{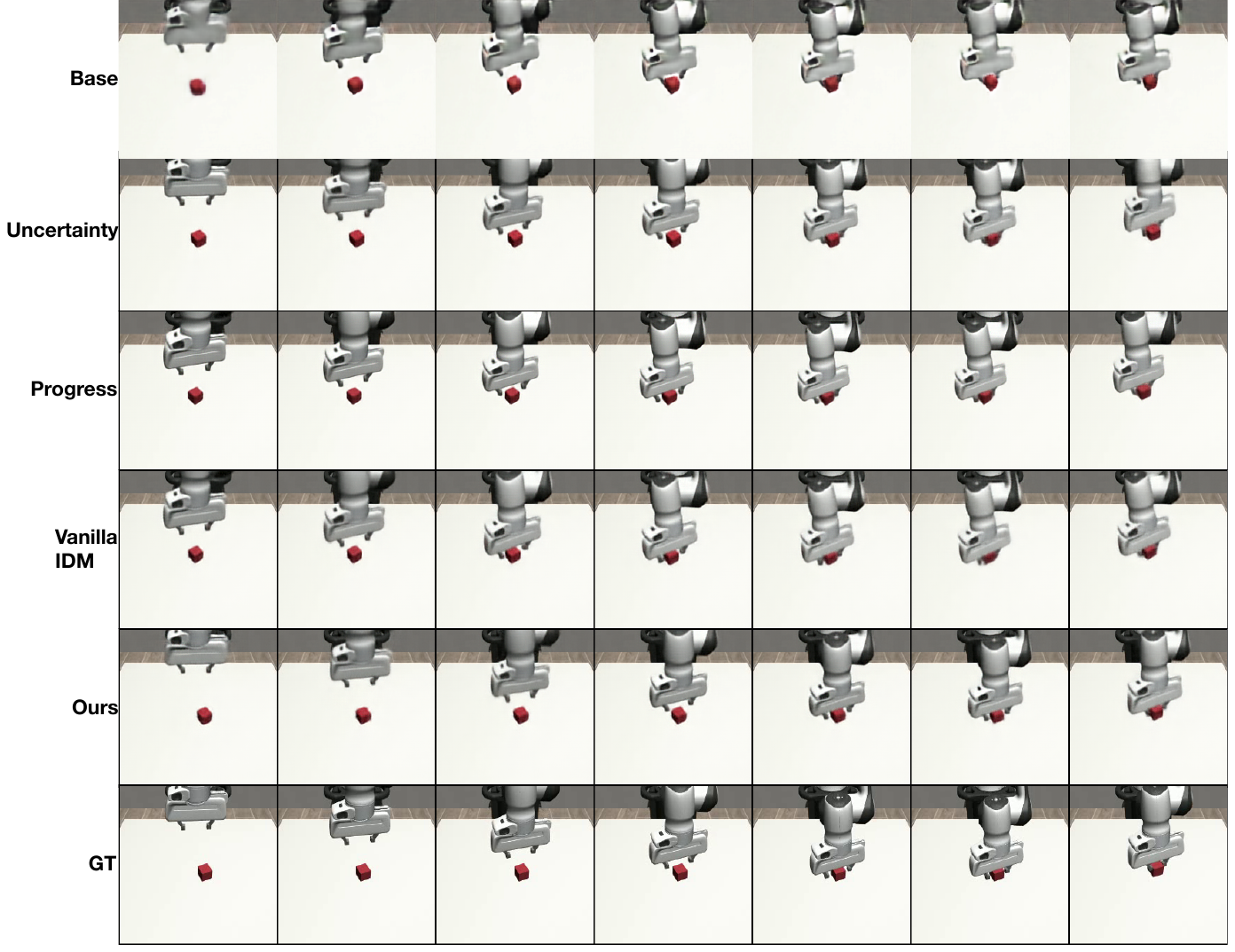}
        \label{fig:lift-vis}
    \caption{Qualitative comparison of world model predictions across different methods on Robomimic Lift.}
    \label{fig:vis-lift}
\end{figure*}

\begin{figure*}[t]
    \centering
        \includegraphics[width=1.0\linewidth]{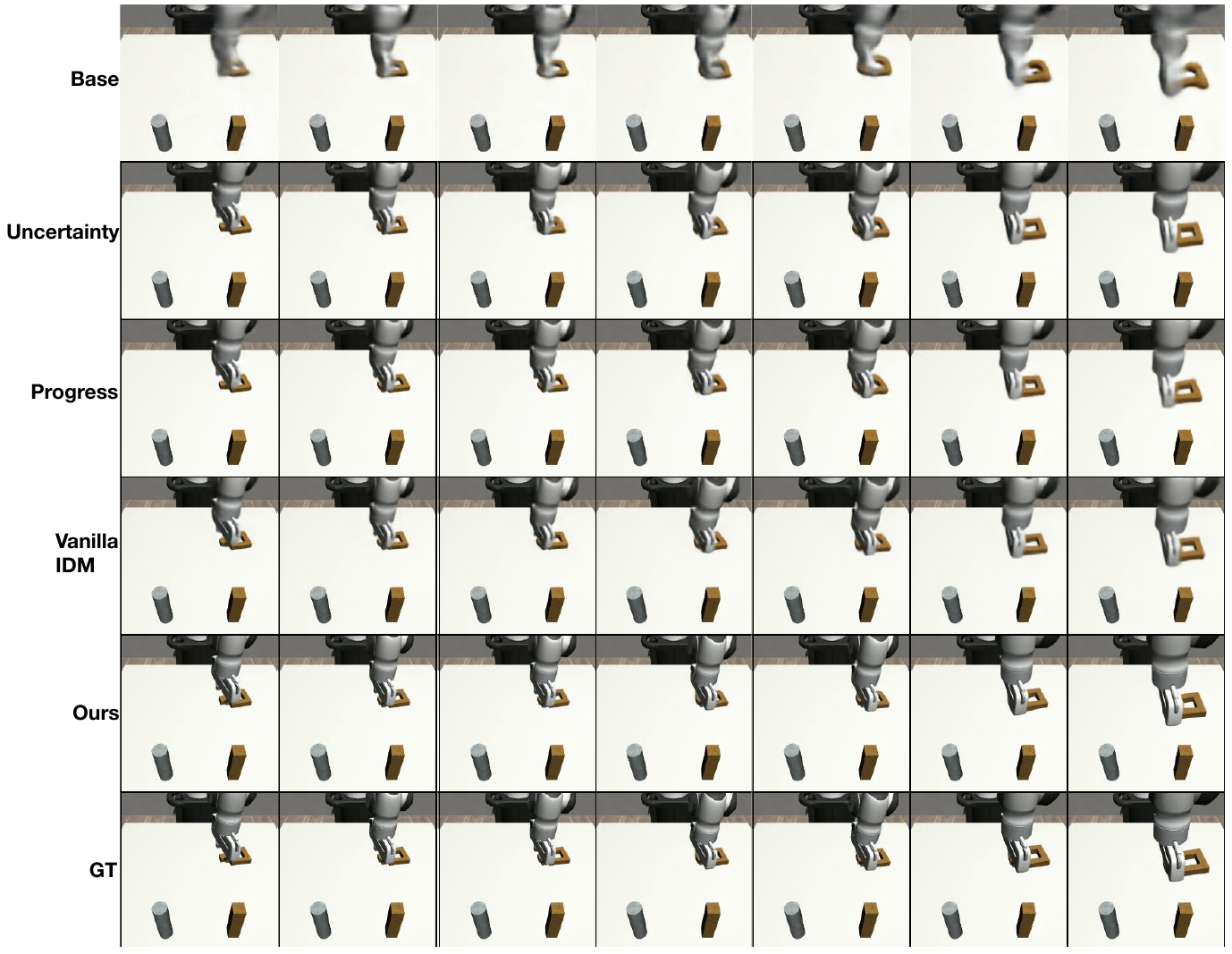}
        \label{fig:squre-vis}
    \caption{Qualitative comparison of world model predictions across different methods on Robomimic Square.} \label{fig:vis-square}
\end{figure*}

\section{Broader Impact}
\label{app:broader_impact}

\methodacro aims to improve the data efficiency and reliability of world-model learning for embodied agents. Potential benefits include reducing the amount of costly robot interaction needed for model improvement and making learned simulators more useful for policy evaluation. Potential risks include over-reliance on imperfect learned world models in safety-sensitive robotics settings, especially when verifier assumptions fail under domain shift. Any real-world deployment should therefore include task-specific validation, monitoring, and human oversight.

\clearpage
\newpage

\end{document}